\newtheorem{theorem}{Theorem}
\newtheorem{defn}{Definition}
\newtheorem{example}{Example}
\let\NAT@parse\undefined
\crefname{equation}{}{}
\Crefname{equation}{}{}
\newcommand{\citet}[1]{{\color{red}TODO}~\cite{#1}}
\NewDocumentCommand{\legendline}{m}{{%
  \hspace*{-.2mm}\begin{tikzpicture}%
    \draw[line width=2pt, #1] (0.0cm,0.0cm) -- (0.3cm,0.2cm);
  \end{tikzpicture}\hspace*{-.2mm}%
}}
\newcommand\transp[1]{{%
    #1^{\mkern-1.5mu\mathsf{T}}
}}
\newcommand{\ourframework}{sNMODE}
\newcommand{\relu}[1]{\operatorname{ReLU}{(#1)}}
\newcommand{\Reals}{\mathbb{R}}
\newcommand{\RealsNonNeg}{\Reals_{\geq 0}}
\newcommand{\manif}{\mathcal{M}}
\newcommand{\unitquat}{S^3}
\newcommand{\spdtwo}{\mathcal{S}^{2}_{++}}
\newcommand{\distance}[1]{d_{#1}}
\newcommand{\tangentsp}[1]{\mathcal{T}_{#1}\mathcal{M}}
\newcommand{\tangentsptwo}[2]{\mathcal{T}_{#1}#2}
\newcommand{\innerprod}[3]{{\langle#2,#3\rangle}_{#1}}
\newcommand{\norm}[2]{{\|#2\|}_{#1}}
\newcommand{\liederiv}[2]{{\mathcal{L}_{#2}\;\!\!#1}}
\newcommand{\grad}[2]{\nabla_{#1}^{#2}}
\newcommand{\refpoint}{x_{r}}
\newcommand{\approxsol}{\tilde{\xi}}
\newcommand{\totangent}[1]{\operatorname{projt}_{#1}}
\definecolor{Colors-A}{RGB}{230,159,0}  
\definecolor{Colors-C}{RGB}{86,180,233}  
\definecolor{Colors-B}{RGB}{0,158,115}  
\definecolor{Colors-D}{RGB}{213,94,0}  
\definecolor{Colors-E}{RGB}{204,121,167}  
\definecolor{Colors-F}{RGB}{0,114,178}  
\definecolor{Colors-G}{RGB}{240,228,66}  
\definecolor{bettergreen}{RGB}{0,127,93}  
\definecolor{betterred}{RGB}{165,71,0}  
\definecolor{betterorange}{RGB}{178,121,0}  
\definecolor{pltblue}{RGB}{31,119,180}
\definecolor{pltorange}{RGB}{255,127,14}
\definecolor{brewergreen}{RGB}{35,139,69}
\definecolor{robotviolet}{RGB}{94,23,235}
\title{{\LARGE\bfseries%
  Stable Robot Motions on Manifolds: Learning\\ Lyapunov-Constrained Neural Manifold ODEs
}}
\author{%
    {David Boetius$^{1}$}, 
    {Abdelrahman Abdelnaby$^{1}$}, 
    {Ashok Kumar$^{2}$},\\
    {Stefan Leue$^{1}$},
    {Abdalla Swikir$^{2}$} and 
    {Fares J. Abu-Dakka$^{3}$}%
    \thanks{%
        $^{1}$David Boetius, Abdelrahman Abdelnaby, and Stefan Leue are with the Department of Computer and Information Sciences University of Konstanz, 78467 Konstanz, Germany {\tt\small \{david.boetius, abdelrahman.abdelnaby, stefan.leue\}@uni-konstanz.de}
    }
    \thanks{%
        $^{2}$Ashok Kumar and Abdalla Swikir are with the Department of Robotics, Mohamed bin Zayed University of Artificial Intelligence, Abu Dhabi, AE {\tt\small \{ashok.kumar,abdalla.swikir\}@mbzuai.ac.ae}
    }
    \thanks{%
        $^{3}$Fares J. Abu-Dakka is with the Mechanical Engineering Department, New York University Abu Dhabi, Abu Dhabi, AE {\tt\small fa2656@nyu.edu}
    }
}
\begin{document}
\maketitle

\thispagestyle{empty}
\pagestyle{empty}




\begin{abstract}
    Learning stable dynamical systems from data is crucial for safe and reliable robot motion planning and control. However, extending stability guarantees to trajectories defined on Riemannian manifolds poses significant challenges due to the manifold's geometric constraints. To address this, we propose a general framework for learning stable dynamical systems on Riemannian manifolds using neural ordinary differential equations. Our method guarantees stability by projecting the neural vector field evolving on the manifold so that it strictly satisfies the Lyapunov stability criterion, ensuring stability at every system state. By leveraging a flexible neural parameterisation for both the base vector field and the Lyapunov function, our framework can accurately represent complex trajectories while respecting manifold constraints by evolving solutions directly on the manifold.
    We provide an efficient training strategy for applying our framework and demonstrate its utility by solving Riemannian LASA datasets on the unit quaternion (\(\unitquat\)) and symmetric positive-definite matrix manifolds, as well as robotic motions evolving on~\(\Reals^3 \times \unitquat\).
    We demonstrate the performance, scalability, and practical applicability of our approach through extensive simulations and by learning robot motions in a real-world experiment.
\end{abstract}



\section{Introduction}
\begin{figure}[t]
    \centering
    \begin{tikzpicture}
        \node (Left) at (0,0) {};
        \node (Right) at ($(.55\linewidth,0)-(.25cm,0)$) {};
        

        \node[below right=-1.45cm and 0cm of Left.west] (demos) {\includegraphics[width=2.95cm]{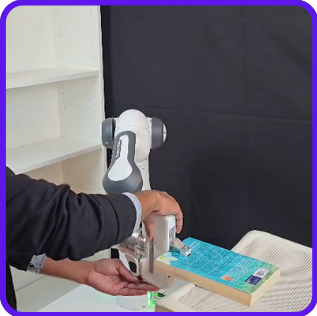}};
        \node[below left=-1.45cm and 0cm of Right.east] (robot) {\includegraphics[width=2.95cm]{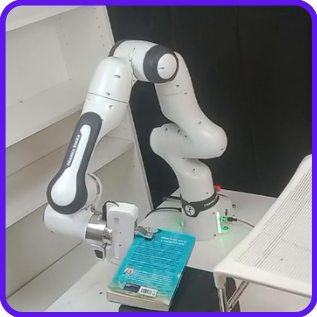}};
        
        \draw[Violet, line width=3pt,{Circle[width=3mm,length=3mm]}-] ($(Left) + (1.6cm,-1.2cm)$) -- ($(Left)!.5!(Right) - (0,0cm)$);
        \draw[Violet, line width=6pt,-{Triangle[width=4mm,length=4mm]}] ($(Left)!.5!(Right) - (0,0cm)$) to[bend left] ($(Right) - (2cm,-.85cm)$);
        \fill[Violet] ($(Left)!.5!(Right) - (0,.2cm)$) circle (1.6cm);
        
        \node[
            name=sNMODE, 
        ] at ($(Left)!.5!(Right) - (.04,.21cm)$) {
            \includegraphics[width=3.5cm]{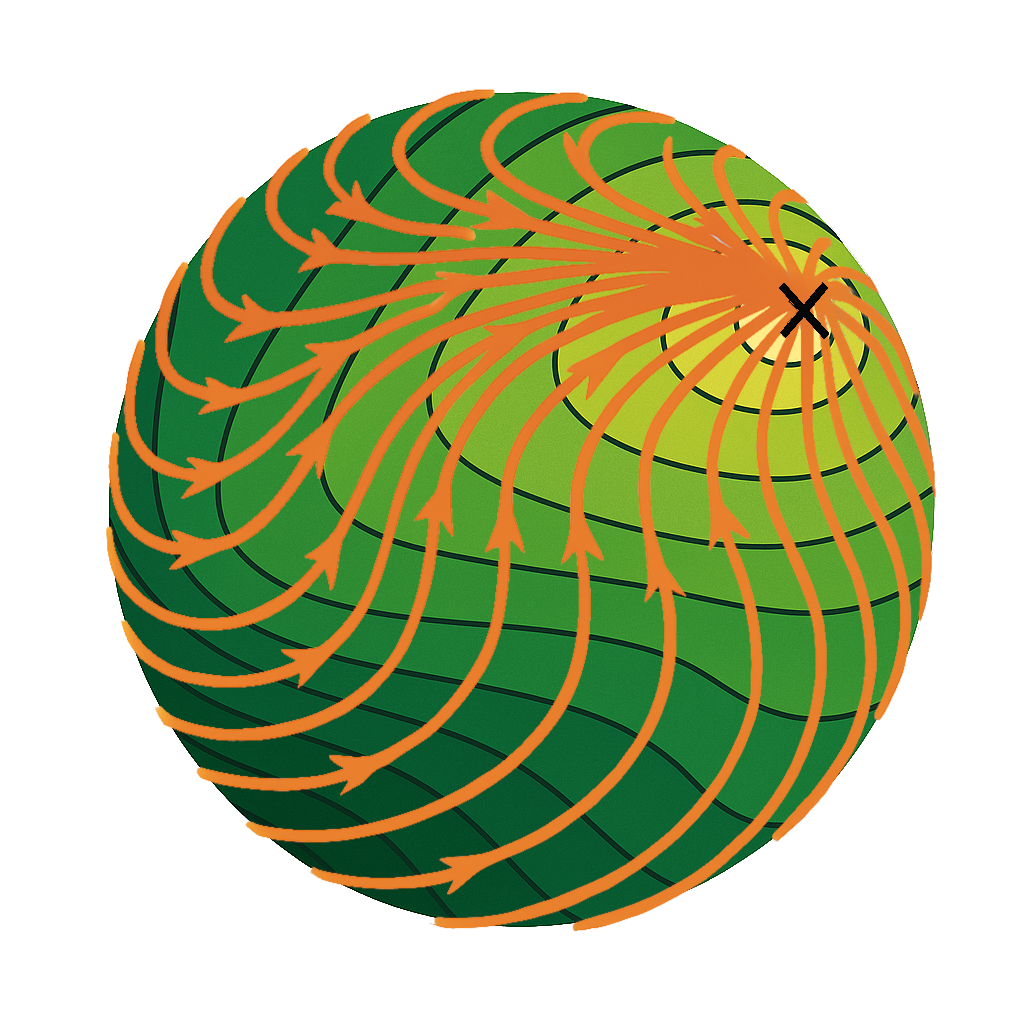}
        };
    \end{tikzpicture}
    \caption{%
        \textbf{Learning Stable Motions on Riemannian Manifolds using \ourframework{}}.
        A user demonstrates position-orientation motions (\emph{Left}), which lie on a Riemannian manifold (\emph{Centre});
        \ourframework{} learns a stable vector field~\legendline{pltorange} that enables the robot to autonomously perform the motion (\emph{Right}).
    }\label{fig:visual-abstract}
\end{figure}

\noindent
The deployment of robotic systems in complex real-world scenarios demands models that are highly expressive and rigorously stable. Learning from demonstrations enables non-expert users to teach robots new skills by providing example trajectories~\cite{Schaal1999,Ravichandar2020Recent,WangSaverianoAbuDakka2022}. 
However, purely data-driven methods frequently lack formal stability guarantees required, in particular, in safety-critical domains. 

Robotic state variables, such as orientation, stiffness, or inertia, naturally reside on smooth non-Euclidean spaces that are inherently structured as Riemannian manifolds~\cite{WangSaverianoAbuDakka2022,AlesEtAl2014,Zeestraten2018,KoutrasDoulgeri2019,ZhangBeikMohammadiRozo2022,SaverianoAbuDakkaKyrki2023}.
To ensure reliable task execution, learned trajectories must accurately capture complex motion patterns on Riemannian manifolds while reliably steering the system toward a desired equilibrium state. Stable dynamical systems offer a principled solution by guaranteeing trajectory convergence and robustness to perturbations, significantly enhancing reliability in practice~\cite{IjspeertEtAl2013,LemmeEtAl2014,KhansariZadehBillard11,NeumannSteil2015,PerrinSchlehuberCaissier2016,BlocherSaverianoDongheui2017,DuanEtAl2019,NawazEtAl2024}. 

Recently, neural ordinary differential equations (NODEs)~\cite{NODEs} have emerged as an expressive framework for modelling continuous-time dynamical systems.
NODEs offer the flexibility to represent intricate trajectories effectively~\cite{LemmeEtAl2014,GruenbacherEtAl2021,WhiteEtAl2023,NawazEtAl2024,SochopoulosGiengerVijayakumar2024}, also in high-dimensional spaces~\cite{AuddyEtAl2023Scalable}. 
While stability of NODEs has been studied extensively within Euclidean spaces~\cite{ManekKolter2019,KangEtAl2021,LuoEtAl2025}, stability guarantees for NODEs on Riemannian manifolds remain an underexplored area.
Without stability guarantees, learned dynamics may drift, oscillate, or exhibit unpredictable behaviour, undermining both safety and performance.

In this paper, we introduce stable neural manifold ordinary differential equations (\ourframework{})~---~a novel stability framework that combines Lyapunov-based stability with NODEs evolving on arbitrary Riemannian manifolds.
Our approach jointly learns an expressive neural manifold vector field and a neural Lyapunov function. 
Projecting the vector field using a Lyapunov-based corrective term guarantees exponential stability towards a predefined equilibrium state~\cite{ManekKolter2019}.
We uncover and resolve a significant issue in several stable NODE frameworks~\cite{ManekKolter2019,SochopoulosGiengerVijayakumar2024,AuddyEtAl2023Scalable} that invalidates their stability guarantees and causes numerical instability in practice. 

For practically applying our framework, we devise an efficient training strategy that divides training \ourframework{}s into pretraining a base vector field on demonstrations, pretraining a Lyapunov function, and fine-tuning the composed system under the stability projection.
Our \ourframework{}s reliably replicate demonstration trajectories while providing strong stability guarantees.
They outperform existing approaches~\cite{ZhangBeikMohammadiRozo2022,SaverianoAbuDakkaKyrki2023} in terms of efficiency, respectively, performance and scalability.
In summary, our main contributions are
\begin{enumerate}
  \item proposing \ourframework{}, a novel manifold-aware Lyapunov projection method that ensures stability on Riemannian manifolds without imposing restrictive assumptions on the manifold or model structure,
  \item resolving a critical stability issue in previous frameworks~\cite{ManekKolter2019,SochopoulosGiengerVijayakumar2024,AuddyEtAl2023Scalable} that invalidates theoretical stability guarantees and causes numerical instability in practice,
  \item providing an efficient training strategy for learning stable robot motions from demonstrations,
  \item addressing a fundamental limitation of the Riemannian LASA dataset~\cite{SaverianoAbuDakkaKyrki2023} regarding data realism, and
  \item performing an extensive experimental evaluation on several Riemannian manifolds, both in simulation and on a real-world robot.
\end{enumerate}


\section{Related Work}\label{sec:related-work}
\newcommand{\ck}{\textcolor{bettergreen}{\(\pmb{\checkmark}\)}}
\newcommand{\no}{\textcolor{betterred}{\(\pmb{\times}\)}}
\newcommand{\na}{--}
\begin{table}[b]
    \centering
    \caption{\textbf{Comparison with Existing LfD Frameworks}.}
    \label{tab:related-work-comparison}
    \begin{tabular}{>{\centering}p{.13\columnwidth} @{} >{\centering}p{.18\columnwidth} @{} >{\centering}p{.18\columnwidth} @{} >{\centering}p{.18\columnwidth} @{} >{\centering\arraybackslash}p{.12\columnwidth}}
        \ \newline\textbf{Reference} & \textbf{Riemannian}\newline\textbf{Manifold} & \ \textbf{Stability}\newline\textbf{Guarantee} & \ \textbf{Learned}\newline\textbf{Vector Field} & \textbf{Neural}\newline\textbf{ODE} \\ \midrule
        
        \cite{AuddyEtAl2023Continual}
        & \no & \no & \ck &\ck \\
        
        \cite{SochopoulosGiengerVijayakumar2024,AuddyEtAl2023Scalable}
        & \no & \(\textcolor{betterorange}{\pmb{\checkmark}}^{\;\!\!\ast}\) & \ck &\ck \\
        
        \cite{LemmeEtAl2014,NawazEtAl2024}
        & \no & \ck & \ck & \ck \\

        \cite{KhansariZadehBillard11}
        & \no & \ck & \ck & \no \\
        
        \cite{WangSaverianoAbuDakka2022}
        & \ck & \no & \ck & \ck \\
        
        \cite{SaverianoAbuDakkaKyrki2023}
        & \ck & \ck & \no &\no \\
        
        \cite{ZhangBeikMohammadiRozo2022}
        & \ck & \(\textcolor{betterorange}{\pmb{\checkmark}}^{\;\!\!\dagger}\)  & \no & \ck \\
        
        \textbf{Ours}
        & \ck & \ck &\ck & \ck \\
    \end{tabular}
    
    {\footnotesize \({}^\ast\)Affected by the equilibrium point issue that we address in this paper.}\\
    {\footnotesize \({}^\dagger\)Unlike the other approaches, the equilibrium point is not user-defined.}
\end{table}
In this section, we review existing works related to stable neural ODEs (NODEs) on Riemannian manifolds. \Cref{tab:related-work-comparison} provides a concise comparison of our \ourframework{} framework to existing learning from demonstrations (LfD) frameworks.
We first discuss approaches in Euclidean space.

\paragraph{NODEs and Stability}
NODEs~\cite{NODEs} are a powerful paradigm for modelling complex continuous-time dynamical systems, from image processing~\cite{NODEs,KangEtAl2021,LuoEtAl2025} to learning robotic skills~\cite{LemmeEtAl2014,WhiteEtAl2023,NawazEtAl2024,SochopoulosGiengerVijayakumar2024,AuddyEtAl2023Continual,AuddyEtAl2023Scalable}.
Stability in NODEs is addressed using Lyapunov functions~\cite{ManekKolter2019,AuddyEtAl2023Scalable,KangEtAl2021,SochopoulosGiengerVijayakumar2024,LuoEtAl2025} and contraction metrics~\cite{KhansariZadehBillard11, NawazEtAl2024}. 

However, these stability frameworks are fundamentally limited to Euclidean spaces and cannot handle data evolving on more general Riemannian manifolds. Additionally, we identify a critical equilibrium consistency issue in~\cite{ManekKolter2019} that invalidates theoretical stability guarantees and causes numerical instability in practice. This issue also affects subsequent works~\cite{AuddyEtAl2023Scalable,SochopoulosGiengerVijayakumar2024} that build upon~\cite{ManekKolter2019}. We address this critical issue in \cref{sec:main-stability-issue}.
%

\paragraph{NODEs on Riemannian Manifolds}
Neural Manifold ODEs (NMODEs)~\cite{DynamicChartMethod} provide an expressive framework for learning dynamical systems while respecting manifold geometry. 
While~\cite{DynamicChartMethod} employs dynamic coordinate charts to solve NMODEs,~\cite{WangSaverianoAbuDakka2022} projects the manifold data into a single tangent space.
Both~\cite{DynamicChartMethod} and~\cite{WangSaverianoAbuDakka2022} lack stability guarantees.

\paragraph{Stable Dynamical Systems on Riemannian Manifolds} 
Learning stable dynamical systems on Riemannian manifolds combines differential geometry, control theory, and machine learning challenges. Recent approaches~\cite{ZhangBeikMohammadiRozo2022,SaverianoAbuDakkaKyrki2023} focus on learning diffeomorphic mappings to transform stable canonical dynamics into complex behaviours using NMODEs and Gaussian mixture models (GMMs).

RSDS~\cite{ZhangBeikMohammadiRozo2022} uses NMODEs to construct diffeomorphisms for Riemannian stable dynamical systems, providing theoretical stability guarantees through pullback operations. However, this method is costly, as it solves an NMODE at each time step, hindering real-time applications. 
Additionally, a fundamental issue in~\cite{ZhangBeikMohammadiRozo2022} is that the equilibrium of the learned diffeomorphism is assumed to automatically correspond to the desired equilibrium point, without enforcing this constraint during training. While~\cite{ZhangBeikMohammadiRozo2022} ensures convergence to \emph{some} equilibrium point, users have no control over the location of this equilibrium point.

In contrast, SDS-RM~\cite{SaverianoAbuDakkaKyrki2023} employs GMMs to effectively learn diffeomorphisms. Compared to RSDS, SDS-RM ensures computational efficiency while guaranteeing convergence to a user-defined equilibrium point. Nevertheless, GMMs struggle with scalability in high-dimensional manifolds~\cite{abu-dakka2018force} and oftentimes fail to capture the complex nonlinear dynamics that neural networks handle well.

Alternative approaches~\cite{ravichandar2019learning,mukadam2020riemannian} explore contraction theory on manifolds~\cite{ravichandar2019learning} and Riemannian motion policies~\cite{mukadam2020riemannian}, but focus on tracking a reference trajectory rather than learning stable dynamics from demonstrations.

\paragraph{Positioning Our Approach}
Our work addresses key limitations in current methods by merging the expressiveness of NODEs with stability guarantees on Riemannian manifolds.
Unlike diffeomorphism-based methods~\cite{ZhangBeikMohammadiRozo2022,SaverianoAbuDakkaKyrki2023}, we learn stable vector fields on manifolds directly, avoiding expensive inverse mappings. 
Our approach retains the computational benefits of NODEs while offering the geometric awareness crucial for robotic applications involving orientation, impedance, or other manifold-valued quantities.

%


\section{Background}\label{sec:background}
This section presents the mathematical foundations of Riemannian manifolds, Lyapunov stability, and numerical methods for solving ODEs on Riemannian manifolds.

\subsection{Riemannian Manifolds}\label{sec:riemannian-manifolds}
An~\(n\)-dimensional \emph{manifold}~\(\manif\) is a topological space that locally resembles Euclidean space in the neighbourhood of each point~\(x \in \manif\).
Each manifold is embedded in an Euclidean \emph{ambient space}~\(\Reals^m \supseteq \manif\),~\(m \geq n\).
Each manifold is defined by a collection of charts~\({(\varphi_x: U_x \to U_x')}_{x \in A}\) with index set~\(A\), where each~\(\varphi_x\) is a homomorphism between the open sets~\(U_x \subseteq \manif\) and~\(U_x' \subseteq \Reals^n\).
A manifold is \emph{smooth} if each \emph{transition map}~\(\varphi_x \circ \varphi_y^{-1}\) is smooth for overlapping charts~\(x, y \in A\).
%
Each point~\(x \in \manif\) on a smooth manifold corresponds to an~\(n\)-dimensional \emph{tangent space}~\(\tangentsp{x} \subseteq \Reals^m\), which contains the velocities of curves passing through~\(x\). 
The \emph{tangent bundle}~\(\tangentsp{}= \bigcup_{x \in \manif} \tangentsp{x}\) is the disjoint union of all tangent spaces.
The orthogonal projection~\(\totangent{x}: \Reals^m \to \tangentsp{x}\) maps vectors from the ambient space to the tangent space at~\(x\).
%

A \emph{Riemannian manifold}~\((\manif, g)\) is a smooth manifold equipped with a \emph{Riemannian metric}~\(g\), which assigns to each tangent space~\(\tangentsp{x}\) a positive-definite \emph{inner product}~\(\innerprod{x}{\cdot}{\cdot}: \tangentsp{x} \times \tangentsp{x} \to \Reals\) that varies smoothly with~\(x\).
The induced norm is~\(\norm{x}{u} = \sqrt{\innerprod{x}{u}{u}}\).
The Riemannian metric enables the definition of geometric concepts. 
In particular, the \emph{length} of a piecewise differentiable curve~\(\xi: [a, b] \to \manif\) is~\(\ell(\xi) = \int_a^b \norm{\xi(t)}{\dot{\xi}(t)}dt\), where~\(\dot{\xi}(t) = \frac{d \xi(s)}{d s}|_{s=t}\) is the velocity of~\(\xi\).
The \emph{geodesic}~\(\gamma_{x,y}: [0, 1] \to \manif\) is the shortest curve with~\(\gamma_{x,y}(0) = x\),~\(\gamma_{x,y}(1) = y\), and constant velocity~\(\norm{\gamma(t)}{\dot{\gamma}(t)}, \forall t \in [0, 1]\).
The Riemannian \emph{distance} between~\(x,y \in \manif\) is~\(\distance{\manif}(x, y) = \ell(\gamma_{x,y})\).
The \emph{exponential map}~\(\exp_x: \tangentsp{x} \to \manif\) is defined as~\(\exp_x(u) = \gamma_{x}^u(1)\), where~\(\gamma_x^u\) is the unique geodesic satisfying~\(\gamma_x^u(0) = x\) and~\(\dot{\gamma}_x^u(0) = u\). Its inverse, the \emph{logarithmic map}~\(\log_x: \manif \to \tangentsp{x}\), exists locally and satisfies~\(\exp_x(\log_x(y)) = y\) for~\(y\) sufficiently close to~\(x\).

\paragraph{Differential Equations on Manifolds} 
A \emph{manifold ordinary differential equation (MODE)} is specified by a vector field~\(f: \manif \to \tangentsp{}\) such that~\(f(x) \in \tangentsp{x}\) for all~\(x \in \manif\).
The solution of a MODE is the curve~\(\xi : [0, T] \to \manif\) that satisfies the initial value problem
\begin{equation}
    \frac{d\xi(t)}{dt} = f(\xi(t)) \in \tangentsp{\xi(t)}, \quad \xi(0) = x_0, \quad \xi(t) \in \manif.\label{eqn:mode}
\end{equation}
If~\(f\) is parameterised by a neural network, \cref{eqn:mode} is a \emph{neural manifold ordinary differential equation (NMODE)}.

\paragraph{Lie Derivative} 
The \emph{Lie derivative} of a smooth function~\(F: \manif \to \Reals\) with respect to the vector field~\(f: \manif \to \tangentsp{}\) provides a directional derivative
\begin{equation*}    
  \liederiv{F}{f}(x) = \frac{d}{dt} \left. F(\xi_x(t)) \right\rvert_{t=0} = \innerprod{x}{\grad{x}{\manif} F(x)}{f(x)}
\end{equation*}
where~\(\xi_x\) is the solution of \cref{eqn:mode} with~\(\xi_x(0) = x\) and \(\grad{x}{\manif} F(x) \in \tangentsp{x}\) is the Riemannian gradient of~\(F\) at~\(x\).
Below, we provide an example of a Riemannian manifold: the unit quaternion manifold.
We refer to~\cite{SaverianoAbuDakkaKyrki2023} and~\cite{DynamicChartMethod} for further examples of Riemannian manifolds.

\begin{example}
    The set of unit quaternions~\(\unitquat = \{x \in \Reals^4 \mid \|x\|_2 = 1\}\) equipped with the Euclidean inner product~\(\innerprod{z}{x}{y} = \transp{x}y\) forms a three-dimensional Riemannian manifold embedded in~\(\Reals^4\). The exponential map~\(\exp_x: \tangentsptwo{x}{\unitquat} \to \unitquat\) is given by~\(\exp_x(u) = \cos(\|u\|)x + \sin(\|u\|)\frac{u}{\|u\|}\) for~\(u \neq 0\) and~\(\exp_x(0) = x\). The Lie derivative on~\(\unitquat\) is~\(\liederiv{F}{f}(x) = \transp{\grad{x}{\unitquat} F(x)} f(x)\), where~\(\grad{x}{\unitquat}F(x) = \totangent{x}(\grad{x}{\Reals^4}F(x)) = \grad{x}{\Reals^4}F(x) - x\innerprod{x}{\grad{x}{\Reals^4}F(x)}{x}\).
\end{example}

\subsection{Stability}\label{sec:stability-background}
In this paper, we are primarily concerned with exponential stability on the Riemannian manifold~\(\manif\).
Since some manifolds, such as~\(\unitquat\), do not permit global stability due to the Pointcar{\'e}-Hopf Theorem~\cite{milnor1997topology}, we consider quasi-global stability, which allows additional unstable equilibrium points.

\begin{defn}[Quasi-Global Stability]\label{defn:stability}
    Let~\(f: \manif \to \tangentsp{}\) be a vector field.
    A point~\(x_e\) is an \emph{equilibrium point} if~\(f(x_e) = 0\).
    It is a \emph{stable equilibrium point} if there exist~\(\alpha \geq 0, \beta > 0\) such that~\(\distance{\manif}(\xi(t), x_e) \leq \beta e^{-\alpha t}\distance{\manif}(x_0, x_e)\) for every~\(x_0 \in \manif \setminus E\), where~\(\xi\) is the solution of \cref{eqn:mode} and~\(E \subset \manif\) is a finite set.
    If~\(\alpha > 0\),~\(x_e\) is \emph{\(\alpha\)-exponentially stable}.
\end{defn}
%

\begin{theorem}[Lyapunov Function]\label{thm:lyapunov-fn}
    Let~\(f: \manif \to \tangentsp{}\) with equilibrium point~\(x_e\).
    Let~\(V: \manif \to \RealsNonNeg\) be a continuously differentiable function with~\(V(x_e) = 0\) and~\(V(x) > 0\) for~\(x \in \manif \setminus E\), where~\(E \subset \manif\) is finite.
    If there exist~\(\alpha \geq 0, c_1, c_2, p > 0\), such that~\(\liederiv{V}{f}(x) \leq -\alpha V(x)\) and~\(c_1 \distance{\manif}(x, x_e)^p \leq V(x) \leq c_2 \distance{\manif}(x, x_e)^p\) for all~\(x \in \manif\),~\(x_e\) is a stable equilibrium point.
    If~\(\alpha > 0\),~\(x_e\) is \(\alpha\)-exponentially stable.
\end{theorem}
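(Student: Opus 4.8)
The plan is to run the classical Lyapunov comparison argument, with the Euclidean norm replaced throughout by the Riemannian distance $\distance{\manif}(\cdot, x_e)$ and with the Lie-derivative identity from \cref{sec:background} playing the role of the chain rule along trajectories. Fix an arbitrary $x_0 \in \manif \setminus E$, let $\xi$ be the solution of \cref{eqn:mode} with $\xi(0) = x_0$, and put $v(t) := V(\xi(t))$.

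First I would establish the scalar decay estimate $v(t) \le e^{-\alpha t} v(0)$. Since $V$ is continuously differentiable and $\xi$ is differentiable with $\dot\xi(t) = f(\xi(t)) \in \tangentsp{\xi(t)}$, the Lie-derivative formula gives $\dot v(t) = \liederiv{V}{f}(\xi(t)) = \innerprod{\xi(t)}{\grad{\xi(t)}{\manif}V(\xi(t))}{f(\xi(t))}$, so the hypothesis $\liederiv{V}{f}(x) \le -\alpha V(x)$ yields the differential inequality $\dot v(t) \le -\alpha v(t)$. Multiplying by $e^{\alpha t}$ shows $\frac{d}{dt}\bigl(e^{\alpha t}v(t)\bigr) \le 0$, i.e.\ $v(t) \le e^{-\alpha t} v(0)$ on the maximal interval of existence. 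Before using this as $t \to \infty$, one must check the solution is defined for all $t \ge 0$: the lower sandwich bound gives $\{x \in \manif : V(x) \le V(x_0)\} \subseteq \{x \in \manif : \distance{\manif}(x, x_e) \le (V(x_0)/c_1)^{1/p}\}$, and this set is closed and bounded, hence compact by Hopf--Rinow on the complete manifolds considered here, so $\xi$ stays in a fixed compact set and cannot escape in finite time.

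Next I would convert the decay of $v$ into decay of the distance via the sandwich. Combining $c_1 \distance{\manif}(\xi(t), x_e)^p \le v(t)$, $v(0) \le c_2 \distance{\manif}(x_0, x_e)^p$, and $v(t) \le e^{-\alpha t} v(0)$ gives
\begin{equation*}
  c_1\, \distance{\manif}(\xi(t), x_e)^p \;\le\; e^{-\alpha t}\, c_2\, \distance{\manif}(x_0, x_e)^p,
\end{equation*}
whence $\distance{\manif}(\xi(t), x_e) \le \beta\, e^{-(\alpha/p)t}\, \distance{\manif}(x_0, x_e)$ with $\beta := (c_2/c_1)^{1/p} > 0$. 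This is precisely the bound demanded by \cref{defn:stability}: for $\alpha > 0$ it witnesses $(\alpha/p)$-exponential stability, and for $\alpha = 0$ it reduces to $\distance{\manif}(\xi(t), x_e) \le \beta\, \distance{\manif}(x_0, x_e)$. Since $x_0 \in \manif \setminus E$ was arbitrary, $x_e$ is a stable equilibrium point in the quasi-global sense.

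The routine parts are the integrating-factor (Grönwall) step and the algebra with the sandwich inequalities. The point I expect to require the most care is the a priori forward-completeness argument~---~ruling out finite-time blow-up or exit from the domain before the Lyapunov estimate can be invoked~---~which is where properness of $V$ and relative compactness of its sublevel sets enter; on the compact manifold $\unitquat$ this is automatic, but on non-compact manifolds such as $\spdtwo$ it relies on the completeness remark above. A minor bookkeeping subtlety worth flagging is that the decay rate transfers from $V$ to $\distance{\manif}$ with a factor $1/p$, so one should either normalise $p = 1$ or state the induced rate as $\alpha/p$ to be fully consistent with the ``$\alpha$-exponentially stable'' phrasing.
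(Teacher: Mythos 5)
Your argument is correct, and it is the standard Lyapunov comparison proof; the paper itself does not prove \cref{thm:lyapunov-fn} but defers to the cited reference, so there is no in-paper proof to diverge from. The chain --- differentiate $v(t) = V(\xi(t))$ via the Lie-derivative identity, apply the integrating factor to get $v(t) \le e^{-\alpha t}v(0)$, then sandwich with $c_1 \distance{\manif}(\cdot,x_e)^p \le V \le c_2 \distance{\manif}(\cdot,x_e)^p$ --- is exactly what the theorem needs, and your attention to forward completeness (properness of sublevel sets plus Hopf--Rinow on complete manifolds, trivial on the compact $\unitquat$) is a point most write-ups silently skip. Your closing observation is also substantively right and worth stressing: with \cref{defn:stability}'s convention, the derivation yields $\distance{\manif}(\xi(t),x_e) \le (c_2/c_1)^{1/p}\,e^{-(\alpha/p)t}\,\distance{\manif}(x_0,x_e)$, i.e.\ $(\alpha/p)$-exponential stability rather than $\alpha$-exponential stability as literally stated; since the paper instantiates the theorem with $p=2$, the guaranteed rate on distances is $\alpha/2$. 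This is a defect of the theorem statement, not of your proof, and flagging it rather than papering over it is the right call. One small thing you could make explicit: existence and uniqueness of $\xi$ requires some regularity of $f$ (e.g.\ local Lipschitz continuity) beyond what the statement assumes, which is a standing hypothesis you should name before invoking the solution on a maximal interval.
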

We refer to~\cite{LyapunovFn} for a proof of \cref{thm:lyapunov-fn}.

\subsection{Numerical Solutions of MODEs}\label{sec:solve-mode}
For training NMODEs, we require a method for numerically solving MODEs.
A baseline approach for solving the MODE \cref{eqn:mode} is the \emph{tangent space method (TS)} ~\cite{WangSaverianoAbuDakka2022}.
TS constructs an approximate numerical solution~\(\approxsol_{0:N}\) of \cref{eqn:mode} by computing a numerical solution~\(\tilde{\zeta}_{0:N}\) of the ODE
\begin{equation*}
    \frac{d \zeta(t)}{dt} = (\totangent{\refpoint} \circ f \circ \exp_{\refpoint})(\zeta(t)), \quad
    \zeta(0) = \log_{\refpoint}(x_0),
\end{equation*}
which evolves in the tangent space of a reference point~\(\refpoint \in \manif\).
The numerical solution~\(\tilde{\zeta}\) is computed using a standard technique for solving ODEs, such as the Euler method, or a Runge-Kutta method.
By projecting~\(\tilde{\zeta}\) to the manifold, we obtain the numerical solution of~\cref{eqn:mode} as~\(\approxsol_i = \exp_{\refpoint}(\tilde{\zeta}_i) \in \manif\) for each~\(i \in \{0, \ldots, N\}\).
In our setting, the equilibrium point~\(x_e\) is a natural choice for the reference point~\(\refpoint\).

By evolving in only one tangent space, TS can introduce significant approximation error. 
The \emph{exp-step method (Exp)} reduces this approximation error by switching the reference point frequently.
The steps of the approximate solution~\(\approxsol_{0:N}\) computed by Exp are~\(\approxsol_0 = x_0\) and~\(\approxsol_{i+1} = \exp_{\approxsol_{i}}(\hat{\zeta}_i')\) for~\(i \in \{0, \ldots, N-1\}\), where~\(\tilde{\zeta}_i'\) is the endpoint of a numerical solution of the ODE
\begin{equation*}
    \frac{d \zeta_i(t)}{dt} = (\totangent{\approxsol_i} \circ f \circ \exp_{\approxsol_i})(\zeta_i(t)), \quad
    \zeta_i(0) = 0,
\end{equation*}
over the the time interval~\(t \in [(i-1)T/N, iT/N]\).

Finally, the \emph{dynamic-chart method (DC)}~\cite{DynamicChartMethod} leverages charts instead of tangent spaces to transfer Euclidean ODE solutions to the manifold. 
Let~\(\varphi_0: U_0 \to V_0\) be a chart of~\(\manif\) with~\(x_0 \in U_0\).
DC solves an ODE in~\(V_0\) until the solution approaches the boundary of~\(V_0\).
Once this is the case, the method switches to an adjacent chart~\(\varphi_1: U_1 \to V_1\) and solves another ODE, switching the chart again when the solution of the ODE approaches the boundary of~\(V_1\).
The ODE that is solved for the chart~\(\varphi_i : U_i \to V_i\) is
\begin{equation*}
    \frac{d\zeta_i(t)}{dt} = (D_{\varphi_i^{-1}(\zeta_i(t))}^{\manif}\varphi_i \circ f \circ \varphi_i^{-1})(\zeta_i(t)), \quad 
    \zeta_i(t) \in V_i,
\end{equation*}
where~\(D_{x}^{\manif} \varphi_i: \tangentsp{x} \to \Reals^n\) is the pushforward of~\(\varphi_i\) with respect to~\(x\) and the initial point~\(\zeta_i(0)\) is the end point of the trajectory in the previous chart~\(\varphi_{i-1}\), respectively,~\(x_0\) for~\(i=0\).
The steps of the overall numerical solution are~\(\approxsol_0 = x_0\) and~\(\approxsol_i = \varphi_i^{-1}(\tilde{\zeta}_i'')\) for~\(i \in \{1, \ldots, N\}\) where~\(\tilde{\zeta}_i''\) is the endpoint of the trajectory in the chart~\(\varphi_{i-1}\).
We refer to~\cite{DynamicChartMethod} for more details on DC.


\section{Stable Neural Manifold ODEs}\label{sec:main}
\begin{figure}
    \centering
    \begin{tikzpicture}[
        stepshape/.style={
            shape=rectangle, 
            draw=black, fill=white, 
            line width=1.75pt, 
            minimum height=3cm, 
            minimum width=1cm,
            align=center,
        },
        steparrow/.style={
            single arrow, shape border rotate=0,
            draw=black, thin,
            minimum width=.5cm,
            single arrow tip angle=90,
            single arrow head extend=.125cm,
        },
    ]
        \node (Left) at (0,0) {};
        \node (Right) at ($(.5\linewidth,0)-(.25cm,0)$) {};
        
        \node[
            draw=black, rectangle, line width=1pt, 
            minimum width=.25\linewidth, minimum height=5.5em, 
            align=left, anchor=south west,
        ] (box1) at (0, 0) {};
        \node[
            draw=black, rectangle, line width=1pt, 
            minimum width=.25\linewidth, minimum height=7em, 
            align=left, anchor=south west,
        ] (box2) at (.25\linewidth, 0) {};
        \node[
            draw=black, rectangle, line width=1pt, 
            minimum width=.25\linewidth, minimum height=8.5em, 
            align=left, anchor=south west,
        ] (box3) at (.5\linewidth, 0) {};

        \node[
            draw=black, rectangle, line width=1pt, 
            minimum width=.25\linewidth, minimum height=3em, 
            align=left, anchor=south west,
        ] (stage1) at (0, 0) {%
            \Large\textbf{1.}
            \normalsize
            Pretrain Base NMODE\!
        };
        \node[
            draw=black, rectangle, line width=1pt, 
            minimum width=.25\linewidth, minimum height=4.5em, 
            align=left, anchor=south west,
        ] (stage2) at (.25\linewidth, 0) {%
            \Large\textbf{2.}
            \normalsize
            \begin{tabular}{l@{}}
                Pretrain
                Lyapunov\\
                Function
            \end{tabular}
        };
        \node[
            draw=black, rectangle, line width=1pt, 
            minimum width=.25\linewidth, minimum height=6em, 
            align=left, anchor=south west,
        ] (stage3) at (.5\linewidth, 0) {%
            \Large\textbf{3.}
            \normalsize
            \begin{tabular}{l@{}}
                Finetune
                under\\
                Stability
                Projection
            \end{tabular}
        };
        
        \node[
            above right=-.3cm and 0.55cm of stage1.north west,
            name=img1, 
        ]{%
            \includegraphics[width=3.25cm]{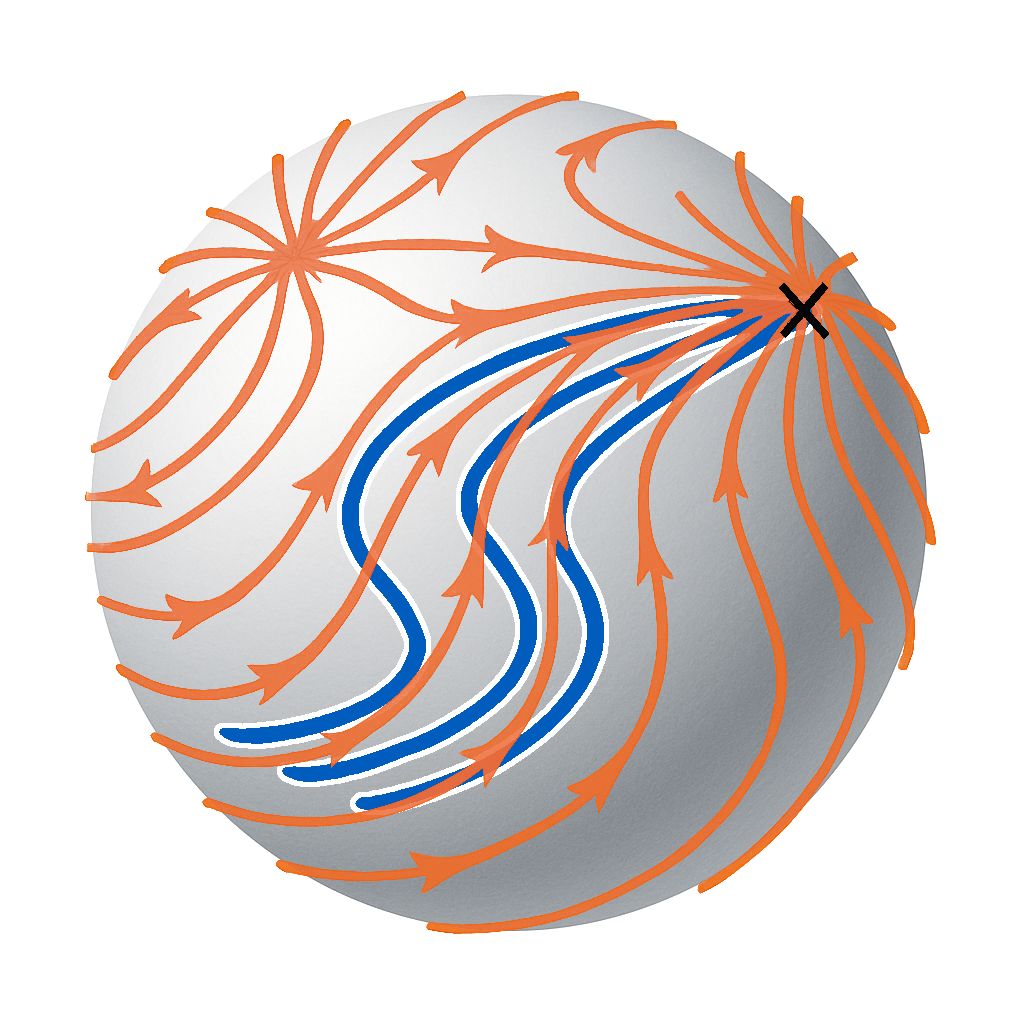}
        };
        \node[
            above right=-.3cm and 0.55cm of stage2.north west,
            name=img2, 
        ]{%
            \includegraphics[width=3.25cm]{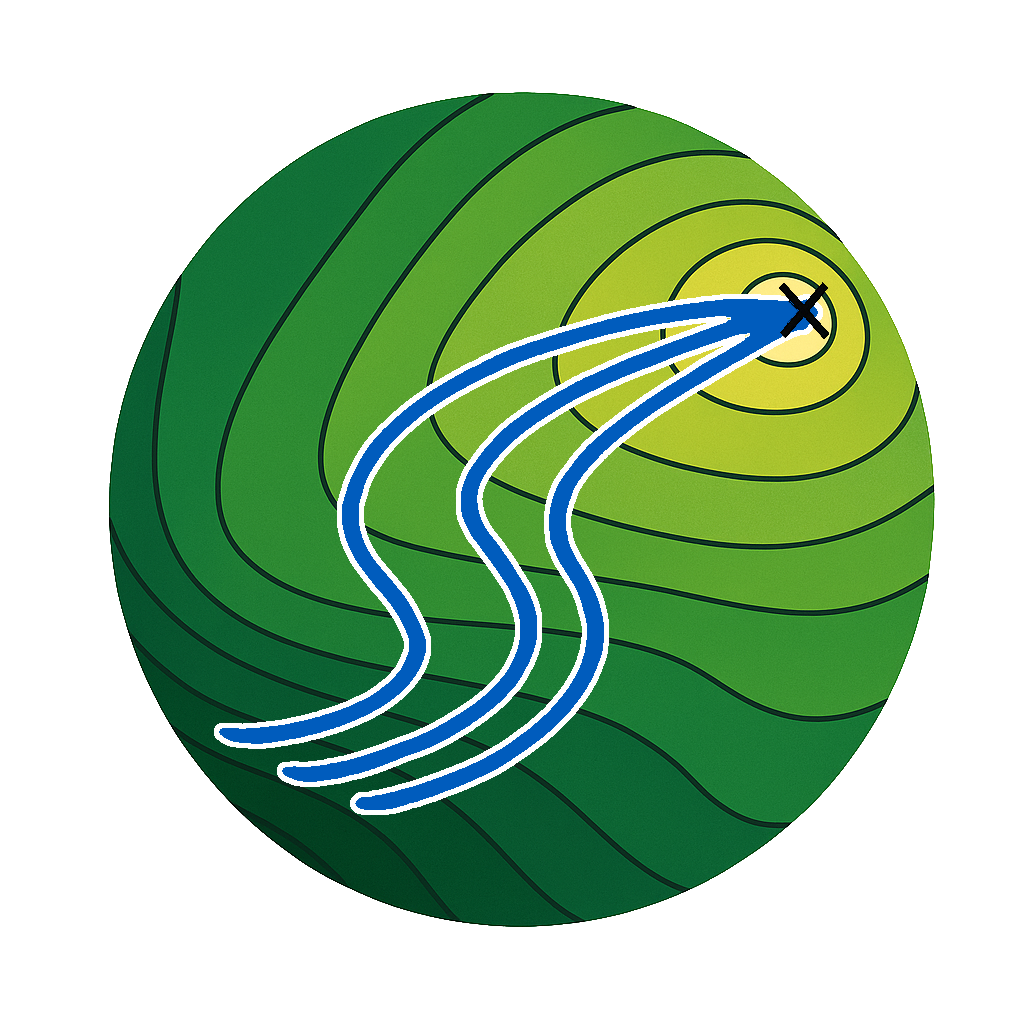}
        };
        \node[
            above right=-.3cm and 0.55cm of stage3.north west,
            name=img3, 
        ]{%
            \includegraphics[width=3.25cm]{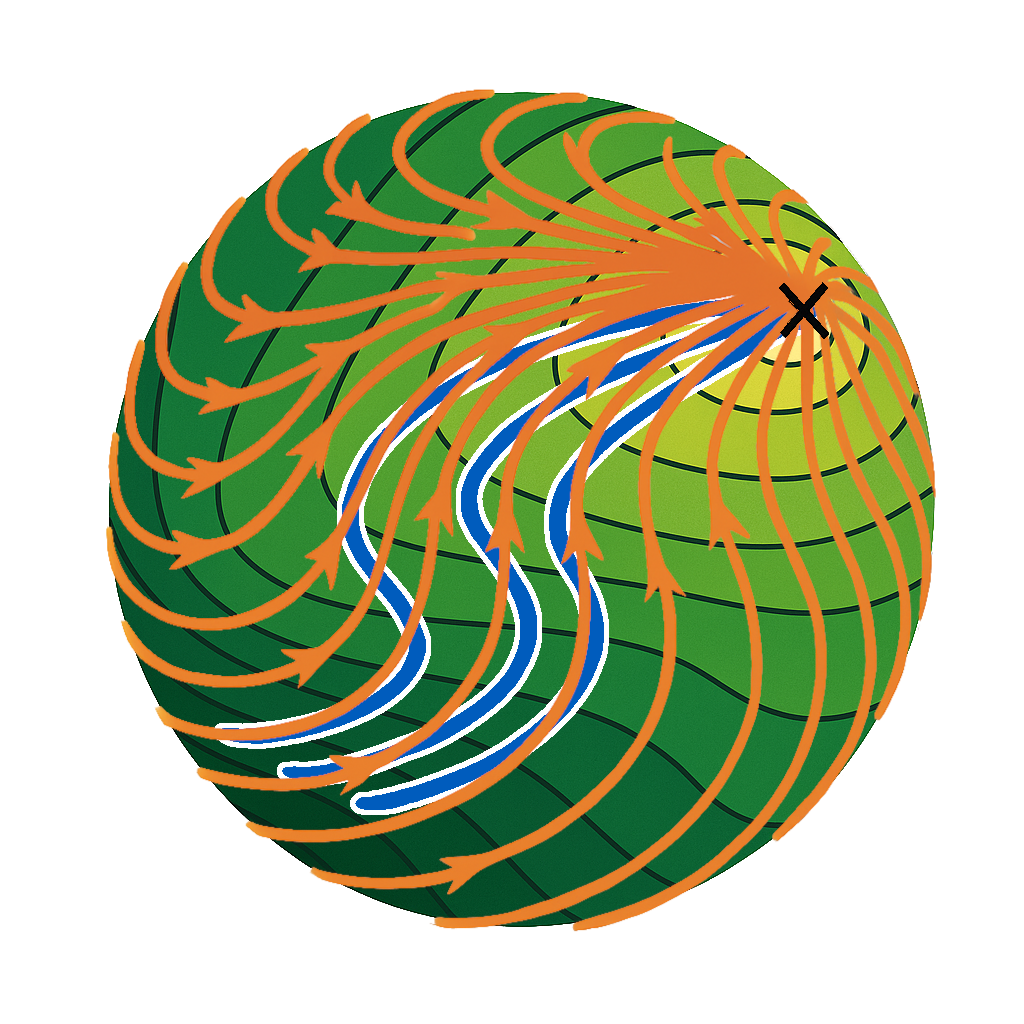}
        };

    \end{tikzpicture}
    \caption{
        \textbf{Our \ourframework{} Three-Stage Training Strategy}.
        The NMODEs \legendline{pltorange} and the Lyapunov function \legendline{brewergreen} are trained on the human demonstrations \legendline{pltblue}.
    }\label{fig:framework}
\end{figure}

In this section, we introduce~\ourframework{}, a novel expressive framework for learning quasi-globally exponentially stable neural ODEs defined on Riemannian manifolds. 
Our approach leverages the Lyapunov stability principles and generalises the successful stability projection proposed in~\cite{ManekKolter2019} from Euclidean space to arbitrary Riemannian manifolds. 
However, we find that~\cite{ManekKolter2019} has a critical issue that invalidates the theoretical stability guarantees and leads to numerical instability in practice.
We address this issue in our NMODE architecture, introduced in \cref{sec:nmode-arch}.
\Cref{sec:nmode-stability} contains our stability proof and \cref{sec:main-stability-issue} discusses the implications of the stability issue in~\cite{ManekKolter2019} in detail.
Before progressing to our experiments, we introduce our efficient training strategy in \cref{sec:nmode-train}, also visualised in \cref{fig:framework}.
%

\subsection{NMODE Architecture}\label{sec:nmode-arch}
Let~\(x_e\) be the desired equilibrium point of the \ourframework{} to train.
Our approach is based on a neural base vector field~\(g: \manif \to \tangentsp{}\) and a positive definite neural Lyapunov function~\(V: \Reals^m \to \Reals\).
For obtaining a well-defined NMODE, we also require that~\(\grad{x}{\manif} V(x) \neq 0\) for every~\(x \in \manif \setminus \{x_e\}\).
This is ensured by combining a convex neural network~\cite{ICNN} with an invertible feature transformation~\cite{ManekKolter2019}.

\paragraph{Base Vector Field}
We first describe the architecture of our base vector field.
Let~\(h: \Reals^m \to \Reals^m\) be any smooth neural network.
Our base vector field~\(g: \manif \to \tangentsp{}\) is
\begin{equation}
    g(x) = \totangent{x}(h(x) - h(x_e)).\label{eqn:base-deriv}
\end{equation}
This definition ensures~\(g(x) \in \tangentsp{x}\) and~\(g(x_e) = 0\), so that~\(x_e\) is an equilibrium point of~\(g\).
When not enforcing~\(g(x_e) = 0\),~\(g\) may learn an incorrect equilibrium point or may not even possess an equilibrium point.
The stability issue in~\cite{ManekKolter2019} is failing to enforce~\(g(x_e) = 0\).
The implications of this are discussed in detail in \cref{sec:main-stability-issue}.

\paragraph{Lyapunov Function}
We define the candidate Lyapunov function~\(V: \manif \to \RealsNonNeg\), similarly to~\cite{ManekKolter2019}, as
\begin{equation}
   V(x) = \sigma(C(F(x)) - C(F(x_e))) + \varepsilon \distance{\manif}(x, x_e)^2,\label{eqn:neural-lyafn}
\end{equation}
where~\(F: \Reals^m \to \Reals^{m'}\) is a smooth invertible Lipschitz neural network~\cite{Lipschitz1,Lipschitz2},~\(C: \Reals^{m'} \to \Reals\) is a smooth convex neural network,~\(\varepsilon > 0\), and~\(\sigma\) is the smoothed ReLU function~\cite{ManekKolter2019}.
Eq.~\eqref{eqn:neural-lyafn} ensures~\(V(x_e) = 0\) and~\(V(x) > 0\) for~\(x \in \Reals^m \setminus \{x_e\}\), while the invertibility of~\(F\) and the convexity of~\(C\) ensure that~\(\grad{x}{\manif} V(x) \neq 0\) for~\(x \in \Reals^m \setminus \{x_e\}\)~\cite{ManekKolter2019}.
Concretely, we use~\(F(x) = \transp{[H(x), x]}\), where~\(H: \Reals^m \to \Reals^{k'}\) is a smooth neural network and invertibility is ensured by carrying over the input~\(x\) to the output.
Following \cite{ManekKolter2019}, we use an input convex neural network (ICNN)~\cite{ICNN} for~\(C\).

\paragraph{Stable Vector Field}
Given the base vector field~\(g\), the Lyapunov function~\(V\), and the desired exponential stability rate~\(\alpha \geq 0\), we can now define the vector field~\(f: \manif \to \tangentsp{}\) of our \ourframework{} as
\begin{align}
    f(x) = g(x) - \grad{x}{\manif}V(x)\frac{\relu{\liederiv{V}{g}(x) + \alpha V(x)}}{\norm{x}{\grad{x}{\manif} V(x)}^2},\label{eqn:stable-nmode}
\end{align}
where we use the convention~\(0 \cdot \frac{x}{0} = 0\) when~\(\grad{x}{\manif}V(x) = 0\).
For isolated points,~\(\grad{x}{\manif} V(x)\) can be undefined since~\(\grad{x}{\manif} \distance{\manif}(x, x_e)\) can be undefined on manifolds that only permit quasi-global stability, instead of global stability.
In this case, we let~\(f(x) = 0\).

\subsection{Stability Guarantee}\label{sec:nmode-stability}
In this section, we prove that the \ourframework{}~\eqref{eqn:stable-nmode} is stable.

\begin{theorem}\label{thm:nmode-stable}
    Let~\(x_e\),~\(f\), and~\(\alpha\) be as in \cref{sec:nmode-arch}.
    The point~\(x_e\) is a stable equilibrium point of \cref{eqn:stable-nmode}.
    If~\(\alpha > 0\),~\(x_e\) is \(\alpha\)-exponentially stable.
\end{theorem}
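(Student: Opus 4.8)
The plan is to derive \cref{thm:nmode-stable} from the Lyapunov criterion of \cref{thm:lyapunov-fn}, using the candidate function $V$ from \cref{eqn:neural-lyafn} together with the vector field $f$ from \cref{eqn:stable-nmode}. Three ingredients have to be verified: (i) $x_e$ is an equilibrium of $f$; (ii) $V$ is a positive-definite, continuously differentiable function that vanishes exactly at $x_e$ outside a finite set; and (iii) the Lie-derivative inequality $\liederiv{V}{f}(x) \le -\alpha V(x)$ holds, together with the two-sided bound $c_1 \distance{\manif}(x,x_e)^p \le V(x) \le c_2 \distance{\manif}(x,x_e)^p$. Once these are in place, \cref{thm:lyapunov-fn} delivers stability, and exponential stability when $\alpha > 0$.

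For (i), I would observe that $g(x_e) = \totangent{x_e}(h(x_e) - h(x_e)) = 0$ by \cref{eqn:base-deriv}, and $V(x_e) = 0$, so $\liederiv{V}{g}(x_e) = \innerprod{x_e}{\grad{x_e}{\manif}V(x_e)}{g(x_e)} = 0$ and therefore $\liederiv{V}{g}(x_e) + \alpha V(x_e) = 0$; hence the $\operatorname{ReLU}$ term in \cref{eqn:stable-nmode} vanishes and $f(x_e) = 0$. For (ii), positive definiteness is exactly the property stated after \cref{eqn:neural-lyafn}: $\sigma \ge 0$, $\sigma(0)=0$, and $\varepsilon \distance{\manif}(x,x_e)^2 > 0$ for $x \ne x_e$. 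This already gives the lower sandwich bound with $c_1 = \varepsilon$, $p = 2$. For the upper bound I would expand the smooth map $x \mapsto C(F(x)) - C(F(x_e))$ near $x_e$, which is $O(\distance{\manif}(x,x_e))$; since $\sigma$ is quadratic near the origin, $\sigma(C(F(x)) - C(F(x_e))) = O(\distance{\manif}(x,x_e)^2)$ locally, and a compactness/growth argument on the region of interest extends this to $V(x) \le c_2 \distance{\manif}(x,x_e)^2$ globally.

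The heart of the argument is (iii). Substituting \cref{eqn:stable-nmode} into the Lie-derivative formula and using bilinearity of $\innerprod{x}{\cdot}{\cdot}$ gives
\[
  \liederiv{V}{f}(x) = \liederiv{V}{g}(x) - \norm{x}{\grad{x}{\manif} V(x)}^2 \,\frac{\relu{\liederiv{V}{g}(x) + \alpha V(x)}}{\norm{x}{\grad{x}{\manif} V(x)}^2} = \liederiv{V}{g}(x) - \relu{\liederiv{V}{g}(x) + \alpha V(x)}.
\]
Writing $a := \liederiv{V}{g}(x) + \alpha V(x)$, if $a \ge 0$ then $\relu{a} = a$ and $\liederiv{V}{f}(x) = (a - \alpha V(x)) - a = -\alpha V(x)$, while if $a < 0$ then $\relu{a} = 0$ and $\liederiv{V}{f}(x) = a - \alpha V(x) < -\alpha V(x)$; in both cases $\liederiv{V}{f}(x) \le -\alpha V(x)$. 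The division is legitimate because the architecture of $V$ enforces $\grad{x}{\manif} V(x) \ne 0$ for $x \ne x_e$, and at $x_e$ the convention $0 \cdot \tfrac{x}{0} = 0$ is consistent with continuity.

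The main obstacle I anticipate is regularity at the cut locus of $x_e$: on manifolds such as $\unitquat$ that only admit quasi-global stability, $\distance{\manif}(\cdot, x_e)^2$ — hence $V$ and $\grad{}{\manif}V$ — fails to be differentiable on a finite set $E$, precisely the points where \cref{eqn:stable-nmode} sets $f(x) = 0$. These become the additional equilibria permitted by \cref{defn:stability}, and the argument needs to confirm that the Lyapunov estimates of \cref{thm:lyapunov-fn} hold on $\manif \setminus E$ and that integral curves starting outside $E$ stay outside it, so that the convergence bound applies for every $x_0 \in \manif \setminus E$. The only other delicate point is securing the global upper sandwich bound, for which a mild boundedness assumption on the operating region suffices.
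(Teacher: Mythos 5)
Your proposal is correct and follows essentially the same route as the paper: verify the hypotheses of \cref{thm:lyapunov-fn} for the candidate $V$ of \cref{eqn:neural-lyafn}, obtain the sandwich bounds with $c_1=\varepsilon$, $p=2$, and establish $\liederiv{V}{f}(x)\le-\alpha V(x)$ by the same cancellation and two-case analysis of the $\operatorname{ReLU}$ term. Your treatment is in fact slightly more careful than the paper's on two minor points (the upper sandwich bound, which the paper attributes only to Lipschitzness of $F$, $\sigma$, $C$, and the behaviour of trajectories relative to the exceptional set $E$), but the underlying argument is the same.
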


\begin{proof}
  Let~\(x_e\),~\(f\),~\(g\),~\(V\), and~\(\alpha\) be as in \cref{sec:nmode-arch}.
  Since~\(V(x_e) = 0\) and~\(g(x_e) = \totangent{x_e}(0) = 0\),~\(x_e\) is an equilibrium point of~\(f\).
  We now show that~\(V\) satisfies \cref{thm:lyapunov-fn}.
  \Cref{thm:nmode-stable} then follow from \cref{thm:lyapunov-fn} immediately.
  First, we have~\(V(x_e) = \sigma(C(F(x_e)) - C(F(x_e)) + \varepsilon\distance{\manif}(x_e, x_e)^2 = \sigma(0) + \varepsilon\distance{\manif}(x_e, x_e)^2 = 0\).
  Second,~\(V(x) > \varepsilon\distance{\manif}(x, x_e)^2 > 0\) for~\(x \in \manif \setminus (\{x_e\} \cup E)\), where~\(E\) are the finitely many points there~\(\grad{x}{\manif} V(x)\) is undefined.
  This also shows that~\(V(x) \geq c_1\distance{\manif}(x, x_e)^2\) for~\(c_1 = \varepsilon > 0\).
  The upper bound~\(V(x) \leq c_2 \distance{\manif}(x, x_e)^2\) follows from~\(F\),~\(\sigma\),and~\(C\) being Lipschitz~\cite{ManekKolter2019}.
  It remains to show that~\(\liederiv{V}{f}(x) \leq - \alpha V(x)\)
  \begin{IEEEeqnarray*}{Cl}
      & \liederiv{V}{f}(x) \\
      =& \innerprod{x}{\grad{x}{\manif}V(x)}{f(x)} \\
      & {}- \innerprod{x}{\grad{x}{\manif}V(x)}{\grad{x}{\manif}V(x)}\frac{\relu{\liederiv{V}{g}(x) + \alpha V(x)}}{\norm{x}{\grad{x}{\manif} V(x)}^2} \\
      =& \liederiv{V}{g}(x) - \relu{\liederiv{V}{g}(x) + \alpha V(x)} \leq -\alpha V(x).
  \end{IEEEeqnarray*}
  The last equation is established by differentiating two cases based on whether~\(\relu{\liederiv{V}{g}(x) + \alpha V(x)}\) is positive or non-positive.
  This establishes \cref{thm:nmode-stable}.
\end{proof}

\subsection{Stability Issue in~\texorpdfstring{\cite{ManekKolter2019}}{Manek \& Kolter (2019)}}\label{sec:main-stability-issue}
The issue in \cite{ManekKolter2019} is that~\cite{ManekKolter2019} does not ensure the desired equilibrium point~\(x_e\) is actually an equilibrium point of the NODE~\(f\).
This issue has carried over to later works~\cite{AuddyEtAl2023Scalable,SochopoulosGiengerVijayakumar2024}, which build upon~\cite{ManekKolter2019}.
What may appear as a minor technical detail has, in fact, significant theoretical and practical implications that we discuss in this section.

The framework of~\cite{ManekKolter2019} is a special case of our framework when the Riemannian manifold is Euclidean space, except for one difference: In~\cite{ManekKolter2019},~\(g\) from \cref{eqn:base-deriv} is unconstrained, such that we may have~\(g(x_e) \neq 0\).
In consequence, the stabilised vector field~\(f\) may also have~\(f(x_e) \neq 0\) in \cite{ManekKolter2019}.
Since~\(f(x_e) = 0\) is a prerequisite for stability, strictly speaking,~\cite{ManekKolter2019} does not offer stability guarantees. 
Effectively,~\cite{ManekKolter2019} enforces convergence to point where~\(f(x_e)\) can be non-zero. 
This leads to numerical issues in practice, since~\(f\) is discontinuous if~\(f(x_e) \neq 0\).
A numerical solver encountering a point~\(x\) that is equal to~\(x_e\) up to machine precision, observes a rapid jump in the vector field~\(f(x)\), which drives away the numerical solution from the intended equilibrium point~\(x_e\).
This is visualised in \cref{fig:numerical-issues}.

In practice, not enforcing~\(g(x_e) = 0\) leads to some training runs aborting due to numerical instability.
If this does not occur, the equilibrium point appears to be shifting during training.
We observed this behaviour in our preliminary experiments, which led us to investigate the causes of this in~\cite{ManekKolter2019}.
In a successful training run, the issues in~\cite{ManekKolter2019} can be masked if the vector field learns to produce~\(f(x_e) \approx 0\).

\begin{figure}
  \centering
  \begin{subfigure}[t]{.49\linewidth}
      \centering
      \includegraphics[width=4cm]{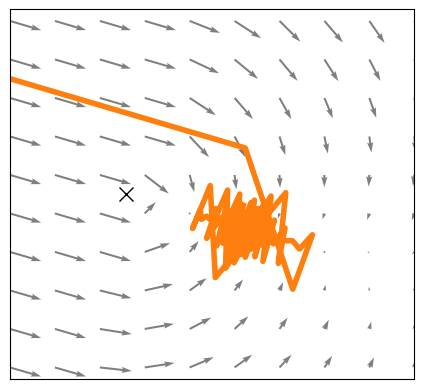}
      \caption{Numerical issues in~\cite{ManekKolter2019}.}
  \end{subfigure}%
  \begin{subfigure}[t]{.49\linewidth}
      \centering
      \includegraphics[width=4cm]{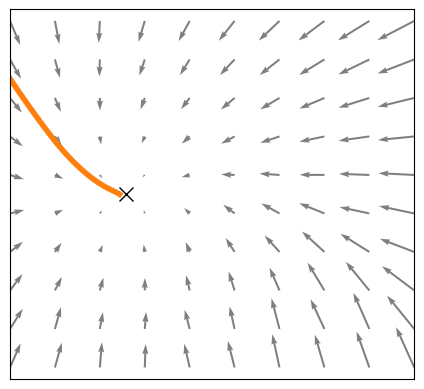}
      \caption{With Our Correction.}
  \end{subfigure}
  \caption{%
    \textbf{Numerical solutions of a vector field stabilised using (a)~\cite{ManekKolter2019} and (b) our approach}.
    In this figure,~\(f(x_e) \neq 0\) when using~\cite{ManekKolter2019} as described in~\cref{sec:main-stability-issue}.
    This leads to numerical instability in~\cite{ManekKolter2019}, which is addressed by our correction.
    In this figure,~\(\manif = \Reals^2\) and~\(V(x) = \distance{\manif}(x, x_e)\).
  }
  \label{fig:numerical-issues}
\end{figure}

\subsection{Training Strategy}\label{sec:nmode-train}
We present an efficient training strategy for fitting \ourframework{}s to a set of motion demonstrations.
Training \cref{eqn:stable-nmode} using gradient descent requires computing a costly second derivative of~\(V\), since it requires computing the gradient of~\(\grad{x}{\manif}V(x)\).
In practice, we found training \cref{eqn:stable-nmode} to require at least twice as much time as training \cref{eqn:base-deriv}, which does not require computing a second derivative.
To train more efficiently, we split the training into three stages, where the first stage has, in turn, two parts:
%
\begin{enumerate}
    \item fit the base vector field~\(g\) to the demonstrations by
    \begin{enumerate}
        \item pretraining on estimated tangents and
        \item fit~\(g\) using multiple shooting~\cite{MultipleShootingNODE};
    \end{enumerate}
    \item train the Lyapunov function~\(V\) to descend along the training trajectories and the solutions of~\(g\); and
    \item fine-tune the assembled stable vector field~\(f\) on the demonstrations.
\end{enumerate}
While the first and second stages find a strong initialisation of~\(g\) and~\(V\), the third stage smoothes out kinks introduced by the stability projection.
In the first stage, we first pretrain on estimated tangent vectors and then leverage multiple shooting~\cite{MultipleShootingNODE} to further speed up training. 
In the following, we define each loss function for a single human demonstration~\(x_{0:N} \in \manif^{N}\) of the motion we want to learn.
In practice, we average each loss function over a small batch of demonstrations.

For the first stage, we estimate the tangent vectors as~\(u_i = \log_{x_i}(x_{i+1})\) for~\(i \in \{0, N-1\}\) and pretrain by minimising the mean squared error (MSE) between~\(g(x_{i})\) and~\(u_{i}\)
\begin{equation*}
    \mathcal{L}_{\text{tangents}} = \frac{1}{N-1}\sum_{i=1}^{N-1} (g(x_{i}) - u_i)^2.
\end{equation*}
Next, multiple shooting divides the demonstration~\(x_{0:N}\) into~\(K\) shorter fragments~\(x_{0:\hat{N}}^{(k)}\) of length~\(\hat{N} \in \{2, \ldots, N\}\) where~\(k \in \{1, \ldots, K\}\) and trains on all fragments in parallel.
Further, it adds a term to the loss function that penalises fragment solutions whose endpoint is not equal to the starting point of the next fragment.
We use a manifold-aware MSE loss together with the multiple shooting penalty
\begin{IEEEeqnarray}{rCl}
    \mathcal{L}_{\text{MS}} &=& 
    \frac{1}{K\hat{N}}\sum_{k=1}^K\sum_{i=0}^{\hat{N}} 
        \distance{\manif}(\approxsol_i^{(k)}, x_i^{(k)})^2 \nonumber\\
    &&{}+ \frac{\lambda_{\text{MS}}}{K-1}\sum_{k=1}^{K-1}\distance{\manif}(\approxsol_{\hat{N}}^{(k)}, \approxsol_{0}^{(k+1)}),\label{eqn:multishoot-loss}
\end{IEEEeqnarray}
where~\(\approxsol_{0:\hat{N}}^{(k)}\) is a numerical solution of the NMODE with vector field~\(g\) for the fragment~\(x_{0:\hat{N}}^{(k)}\) and~\(\lambda_\text{MS} \in \RealsNonNeg\) is a hyperparameter.
During training, we linearly increase the shot length~\(\hat{T}\) within a predefined interval.
Training with longer shots is more costly but necessary to learn high-level features of the trajectories.

For the second stage, we train~\(V\) to decay exponentially along the original training trajectories and the NMODE solutions for~\(g\) that were learned in stage one.
Concretely, we train~\(V\) to minimize the the loss function
\begin{IEEEeqnarray*}{rCl}
    \mathcal{L}_V &=& 
    \frac{1}{N}\sum_{i=1}^{N} \relu{V(\hat{x}_i) - e^{-\alpha t\delta_t}V(x_0)},
\end{IEEEeqnarray*}
where~\(\hat{x}_{0:N}\) is either a human demonstration, or a solution for~\(g\).
In practice, we again average~\(\mathcal{L}_V\) over a small batch of both demonstrations and solutions of~\(g\).

In the third stage, we fine-tune the assembled \ourframework{} vector field~\(f\) using the multiple shooting technique also applied in stage one.
The loss function is \eqref{eqn:multishoot-loss} with the only change that~\(\approxsol_{0:\hat{N}}^{(k)}\) are approximate solutions for~\(f\) instead of~\(g\).


\section{Experiments}\label{sec:experiments}
In this section, we compare our \ourframework{} framework to the existing robotic motion learning frameworks RSDS~\cite{ZhangBeikMohammadiRozo2022} and SDS-RM~\cite{SaverianoAbuDakkaKyrki2023} and demonstrate the practical applicability of our approach to real-world robot tasks.
Additionally, we compare different NMODE solvers from \cref{sec:solve-mode} and compare our three-stage training strategy to training~\eqref{eqn:stable-nmode} directly without a strong initialisation.

\paragraph*{Evaluation Details}
Following previous work~\cite{Lasa,SaverianoAbuDakkaKyrki2023,AuddyEtAl2023Continual}, we train on all trajectories and report training losses.
We use AdamW~\cite{AdamWTrain} for training, the Euler method for solving ODEs, and optimise learning rates, training duration, and shot lengths for multiple shooting by running~\(100\) \texttt{Optuna}~\cite{Optuna} trials for each NMODE.
All hyperparameters are available online\footnote{\url{https://drive.google.com/file/d/1WZlMBKk4kJngwMAxgGIs09xvDWu2HSgU/}} alongside our code.
%
We implement our approach based on \texttt{PyTorch}~\cite{PyTorch} and \texttt{torchdiffeq}~\cite{torchdiffeq}.
Our experiments were run on an Ubuntu 24.04 compute server with an AMD Ryzen Threadripper 3960X CPU.

\subsection{Datasets}\label{sec:experiments-datasets}
\begin{figure}
    \centering
    \begin{tabular}{@{}c@{}c@{}c@{}c@{}}
        \multicolumn{2}{c}{\textbf{Original~\cite{SaverianoAbuDakkaKyrki2023}}} & 
        \multicolumn{2}{c}{\textbf{Improved (Ours)}} \\ \cmidrule(lr){1-2}\cmidrule(lr){3-4}
        \includegraphics[width=.2125\textwidth]{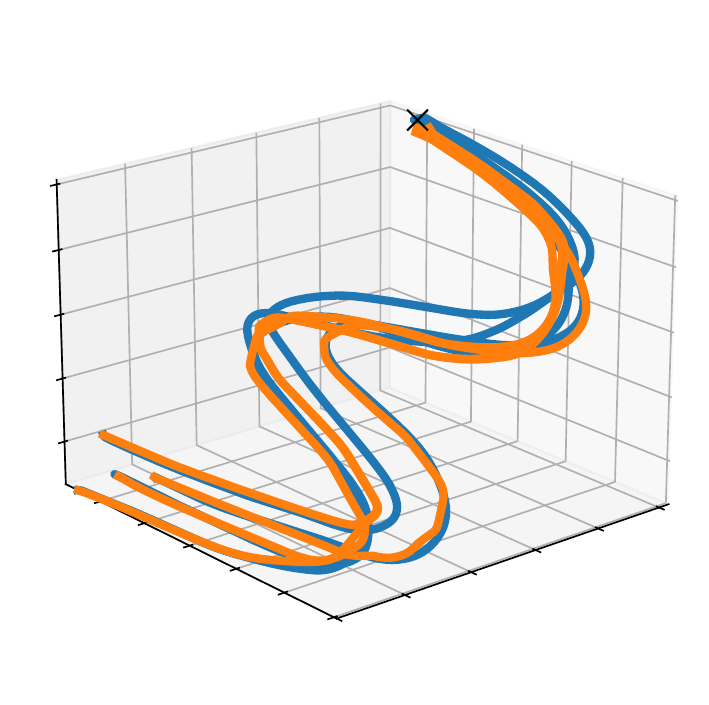} &
        \includegraphics[width=.2125\textwidth]{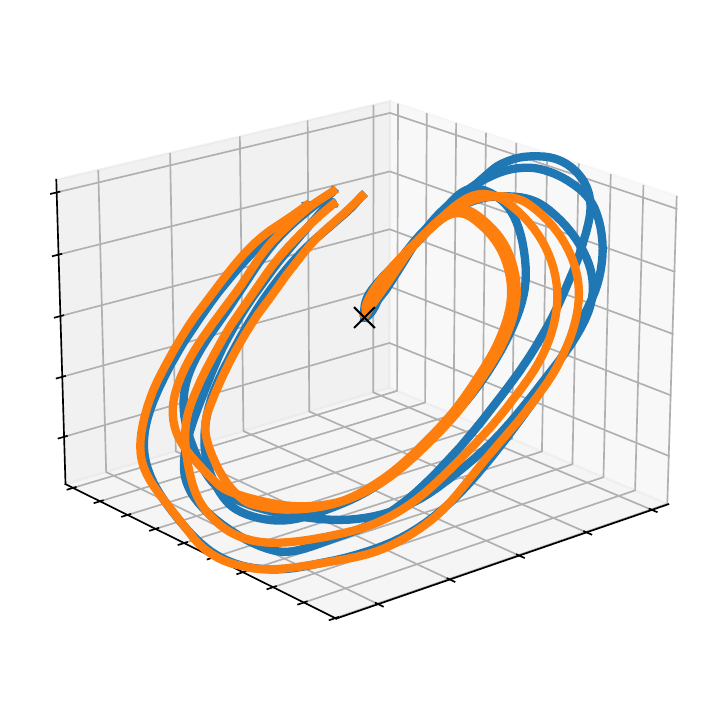} &
        \includegraphics[width=.2125\textwidth]{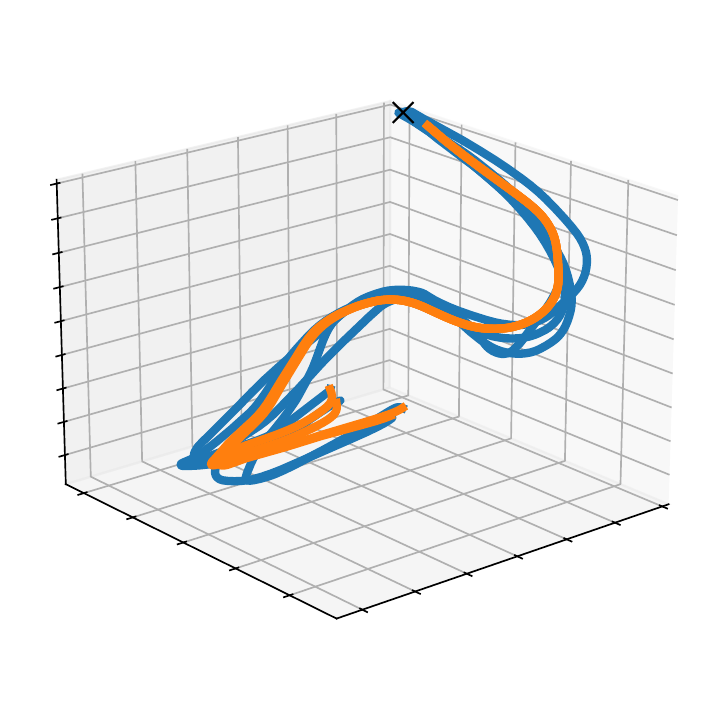} & 
        \includegraphics[width=.2125\textwidth]{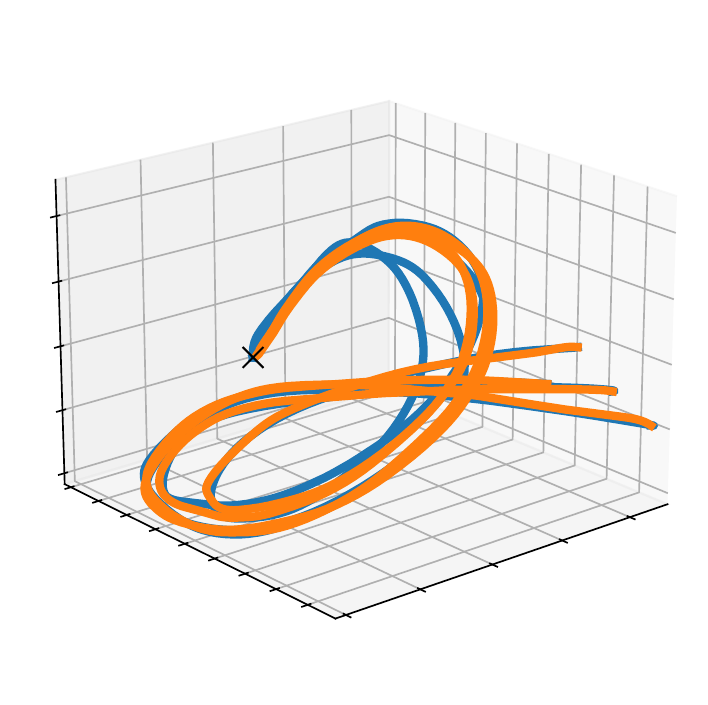} \\
        \texttt{WShape} & \texttt{GShape} & \texttt{WShape} & \texttt{GShape}
    \end{tabular}
    \caption{\textbf{Unit Quaternion LASA Datasets.}
        In our improved dataset, the LASA shapes are visibly distorted by the manifold geometry, while the original dataset maintains the Euclidean geometry.
        The unit quaternions are displayed in an axis-angle representation.
        The stable NMODEs solutions~\legendline{pltorange} follow the demonstrations~\legendline{pltblue} from both datasets.
    }\label{fig:new-riemannian-lasa-results}
\end{figure}

We use the RoboTasks9 dataset~\cite{AuddyEtAl2023Scalable} and create improved versions of the unit quaternion (\(\unitquat\)) and~\(2\times 2\) symmetric positive-definite matrix (\(\spdtwo\)) Riemannian LASA datasets~\cite{SaverianoAbuDakkaKyrki2023}, where~\(\spdtwo\) is equipped with the affine-invariant metric.
RoboTasks9 contains demonstrations for nine robot tasks, such as opening a box or shelving a bottle.
Each task has nine demonstrations of~\(1000\) samples of a robot's end effector pose that lie on the~\(\Reals^3 \times \unitquat\) manifold.

Saveriano et al.~\cite{SaverianoAbuDakkaKyrki2023} construct Riemannian versions of the LASA datase~\cite{Lasa} on~\(\unitquat\) and~\(\spdtwo\) by embedding each 2D LASA trajectory into a fixed tangent space at the goal and then projecting it onto the manifold using the exponential map.
This approach yields trajectories that (\emph{i}) lie in a small neighbourhood of the goal~---~exhibiting minimal curvature~---~and (\emph{ii}) these datasets provide an unrealistic advantage to approaches working in a single tangent space, since such approaches can directly reconstruct the Euclidean data. 
To address these shortcomings, we propose an enhanced transfer procedure. First, as in~\cite{SaverianoAbuDakkaKyrki2023}, we form four~\(\Reals^3\) trajectories for each of the 30 LASA shapes. We estimate velocities by~\(v_i = x_{i+1} - x_i\) for ~\(i \in \{0, \ldots, N-1\}\) and interpolate them linearly to obtain a continuous velocity field~\(v: [0, N] \to \Reals^3\). We then amplify~\(v\) by a factor of~\(20\) to ensure substantial manifold coverage. Finally, we integrate the scaled field~\(d\xi(t)/dt = -20v(t)\) on~\(\unitquat\) and~\(\spdtwo\) starting from a fixed goal point, using the dynamic‐chart method with a fourth‐order Runge–Kutta solver, to generate Riemannian trajectories with significant curvature. The resulting datasets consist of four demonstrations of~\(1000\) samples for each LASA shape, all terminating at a common target. Example trajectories from our~\(\unitquat\) dataset are shown in \Cref{fig:new-riemannian-lasa-results}.

\subsection{Numerical NMODE Solvers}\label{sec:experiments-mode-solvers}
We compare the three numerical NMODE solvers from \cref{sec:solve-mode} on our Riemannian LASA datasets when using our three-stage training strategy.
For this comparison, we select five LASA shapes each from~\(\unitquat\) LASA and~\(\spdtwo\) LASA that we found to be challenging to fit in a preliminary evaluation.
Perhaps surprisingly, using the dynamic chart method (DC) does not outperform the tangent space (TS) and exp-step (Exp) approaches, although DC is considerably more accurate than TS and Exp~\cite{DynamicChartMethod}.

\Cref{tab:comparison-lasa} summarises the results of this experiment.
Our results show that the \ourframework{}s learn to mitigate the inaccuracy of TS and Exp, so that TS and Exp match the performance of DC and even outperform it in many cases. 
This is possible, since TS and Exp use fewer highly non-linear manifold operations than DC, which makes it easier to train NMODEs solved using TS and Exp.
At the same time, TS and Exp allow for faster training.
In our remaining experiments, we use Exp on~\(\unitquat\) and~\(\Reals^3 \times \unitquat\) and TS on~\(\spdtwo\).
\Cref{fig:new-riemannian-lasa-results} visualises a selection of our results on~\(\unitquat\) LASA.

\subsection{Training Strategies}\label{sec:experiments-training-strategies}
We compare our three-stage training strategy from \cref{sec:nmode-train} to training \cref{eqn:stable-nmode} directly from a random parameter initialisation (direct training).
The comparison is based on the five shapes from our~\(\unitquat\) LASA dataset also used in \cref{sec:experiments-mode-solvers}.
We use the Exp-Step method in this experiment.

We first compare the time requirements of training the base NMODE~\eqref{eqn:base-deriv} to training the \ourframework{}~\eqref{eqn:stable-nmode}.
Across~\(100\) repetitions, computing gradients of an NMODE solution with~\(100\) time steps has a median runtime of~\(0.18\)s for \cref{eqn:base-deriv} compared to~\(0.85\)s for \cref{eqn:stable-nmode}\footnote{The inter-quartile ranges (IQR) are within~\(0.004\)s of the median.}.
This difference allows our three-stage training strategy to significantly outpace direct training.
Concretely, our three-stage strategy achives a~\(1.99\)-fold speedup (IQR:~\(1.54\)--\(2.47\)) with a median runtime of~\(135\)s (IQR:~\(131\)s--\(182\)s) compared to~\(364\)s (IQR:~\(207\)s--\(502\)s) for direct training with indepently tuned hyperparameters.
At the same time, both strategies achieve comparable loss values: the median loss of three-stage training is~\(0.11\) (IQR:~\(0.11\)--\(0.18\)), while direct training achieves~\(0.14\) (IQR:~\(0.10\)--\(0.15\)).

\begin{table}
    \centering
    \caption{%
        \textbf{Results for Our Improved Riemannian LASA Datasets}.
        This table reports the root mean squared distance between the human demonstrations and the final learned solutions for different approaches.
        We compare the tangent space (TS), exp-step (Exp), and dynamic chart (DC) methods for numerically solving NMODEs.
    }\label{tab:comparison-lasa}
    \small
    \begin{tabular}{p{1.84cm}@{}ccccc@{\hspace{1em}}ccccc}
        & \multicolumn{5}{c}{\(\unitquat\) (Unit Quaternions)} 
        & \multicolumn{5}{c}{\(\spdtwo\) (Symmetrix Positive-Definite \(2 \times 2\) Matrices)} \\ \cmidrule(lr){2-6}\cmidrule(lr){7-11}
        & \texttt{GShape}\hspace{-2mm} & \texttt{NShape}\hspace{-2mm} & \texttt{WShape} & \hspace{-2.5mm}\texttt{BendedLine}\hspace{-2.5mm} & \texttt{Multi}\hspace{1mm}\texttt{4}
        & \texttt{CShape}\hspace{-2mm} & \texttt{Sshape}\hspace{-2mm} & \texttt{Zshape}\hspace{-2mm} & \texttt{Leaf}\hspace{1mm}\texttt{1}\hspace{-2mm} & \texttt{Multi}\hspace{1mm}\texttt{3} \\ \midrule
        \multicolumn{10}{l}{\textbf{\ourframework{} (Ours)}} \\
        1. TS  & \(\mathbf{0.11}\) & \(0.19\)          & \(\mathbf{0.06}\) & \(0.41\)          & \(0.15\)
               & \(\mathbf{0.0064}\) & \(        0.0056 \) & \(\mathbf{0.0061}\) & \(        0.0047 \) & \(\mathbf{0.0018}\)
        \\
        2. Exp & \(\mathbf{0.11}\) & \(\mathbf{0.18}\) & \(0.10\)          & \(0.33\)          & \(\mathbf{0.11}\)
               & \(0.0068\)          & \(\mathbf{0.0049}\) & \(        0.0079 \) & \(\mathbf{0.0037}\) & \(        0.0020 \)
        \\
        3. DC  & \(0.20\)          & \(0.19\)          & \(0.10\)          & \(\mathbf{0.27}\) & \(0.15\)
               & \(        0.0103 \) & \(        0.0079 \) & \(        0.0090 \) & \(        0.0077 \) & \(        0.0047 \) 
        \\ \midrule
        \textbf{SDS-RM}~\cite{SaverianoAbuDakkaKyrki2023} 
            & \(1.32\)           & \(1.37\)         & \(1.39\)          & \(1.39\)           & \(0.93\)
            & \(0.0146\)         & \(0.0071\)       & \(0.0159\)        & \(0.0091\)         & \(0.0075\)
        \\
    \end{tabular}
\end{table}

\subsection{Comparison with RSDS\texorpdfstring{~\cite{ZhangBeikMohammadiRozo2022}}{}}
We perform a runtime comparison with RSDS~\cite{ZhangBeikMohammadiRozo2022}, demonstrating the computational efficiency of our approach.
Specifically, we compare the runtime of computing a tangent vector of the~\(\Reals^3 \times \unitquat\) manifold using RSDS and~\ourframework{}.
Since no code is available for RSDS, we recreated its model architecture.
Over~\(100\) repetitions, our approach computes tangent vectors in a median runtime of~\(0.002\)s versus~\(0.053\)s and gradients in~\(0.005\)s versus~\(0.161\)s, being one order of magnitude faster than RSDS during deployment and two orders of magnitude faster during training\footnote{All runtime measurements have an IQR within~\(0.0005\)s of the median.}.
Since the training process is discussed only sparsely in~\cite{ZhangBeikMohammadiRozo2022}, we were unable to perform a more detailed comparison with RSDS.

\subsection{Comparison with SDS-RM\texorpdfstring{~\cite{SaverianoAbuDakkaKyrki2023}}{}}
We compare our approach to SDS-RM~\cite{SaverianoAbuDakkaKyrki2023} on our improved Riemannian LASA datasets.
\Cref{tab:comparison-lasa} compares the performance of our \ourframework{}s to the Gaussian mixture models (GMMs) learned by SDS-RM.
For SDS-RM, we manually tune the number of GMM components and report the best result from five random restarts for each dataset.
Across the board, our \ourframework{}s outperform the GMMs learned by SDS-RM.
This demonstrates that our approach is more suitable for learning complex shapes that span Riemannian manifolds than SDS-RM.
Additionally, GMMs are known to scale poorly to high-dimensional manifolds~\cite{abu-dakka2018force}, while~\ourframework{} scales linearly in the manifold dimension, as demonstrated in the next section.

\subsection{Jointly Learning Stable Motions of Several Manipulators}
In this section, we study learning a joint motion of several manipulators based on the RoboTasks9 dataset.
This demonstrates that our approach is suitable for highly complex tasks that require up to nine manipulators and shows that the training time of our approach scales linearly in the manifold dimension.
To obtain a dataset of multi-manipulator demonstrations, we combine tasks from the RoboTasks9 dataset~\cite{AuddyEtAl2023Scalable} in a fixed order.
\Cref{fig:manytasks} presents our results on this dataset, demonstrating that our approach can effectively learn up to nine tasks while the training time scales linearly in the manifold dimension.

\begin{figure*}
    \centering
    \begin{tabular}{l@{\hspace{3mm}}c@{\hspace{4mm}}c@{\hspace{4mm}}c@{\hspace{4mm}}c@{\hspace{4mm}}c@{\hspace{4mm}}c@{\hspace{4mm}}c@{\hspace{4mm}}c}
           \includegraphics[width=1.45cm]{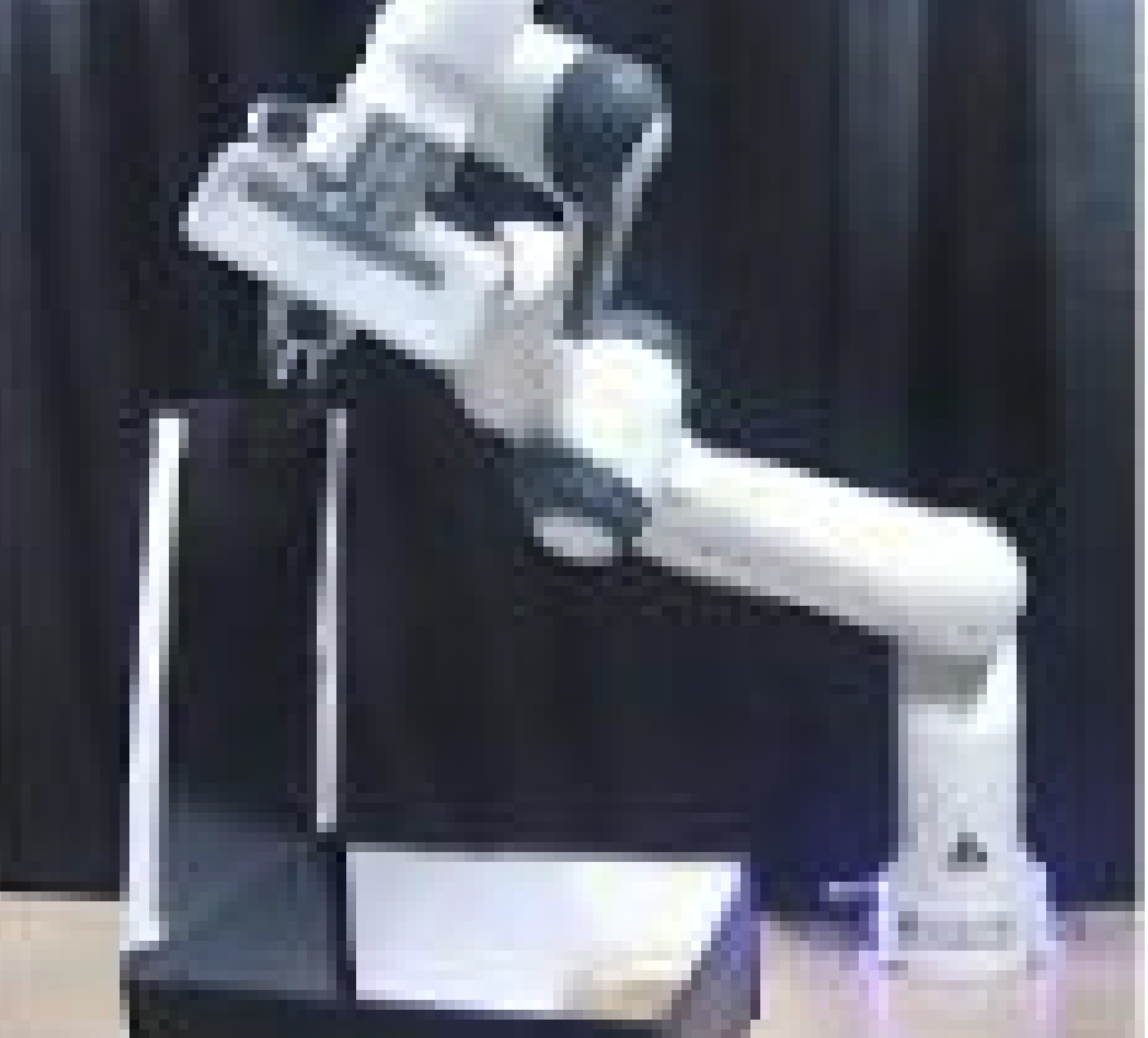}               \clap{\hspace*{3.5mm}\raisebox{.45cm}{\(\times\)}}
         & \includegraphics[width=1.45cm]{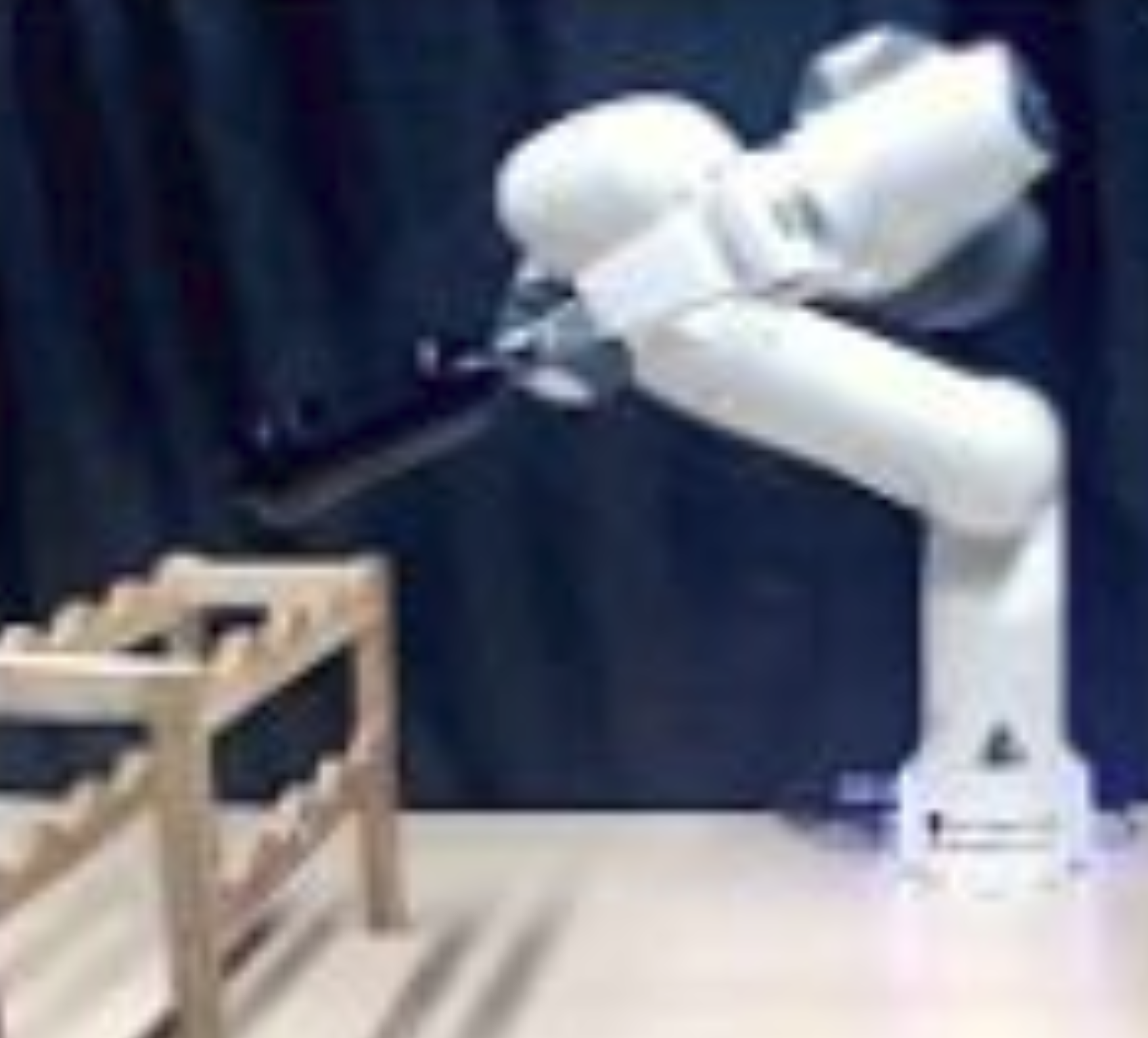}          \clap{\hspace*{3mm}\raisebox{.45cm}{\(\times\)}}
         & \includegraphics[width=1.45cm]{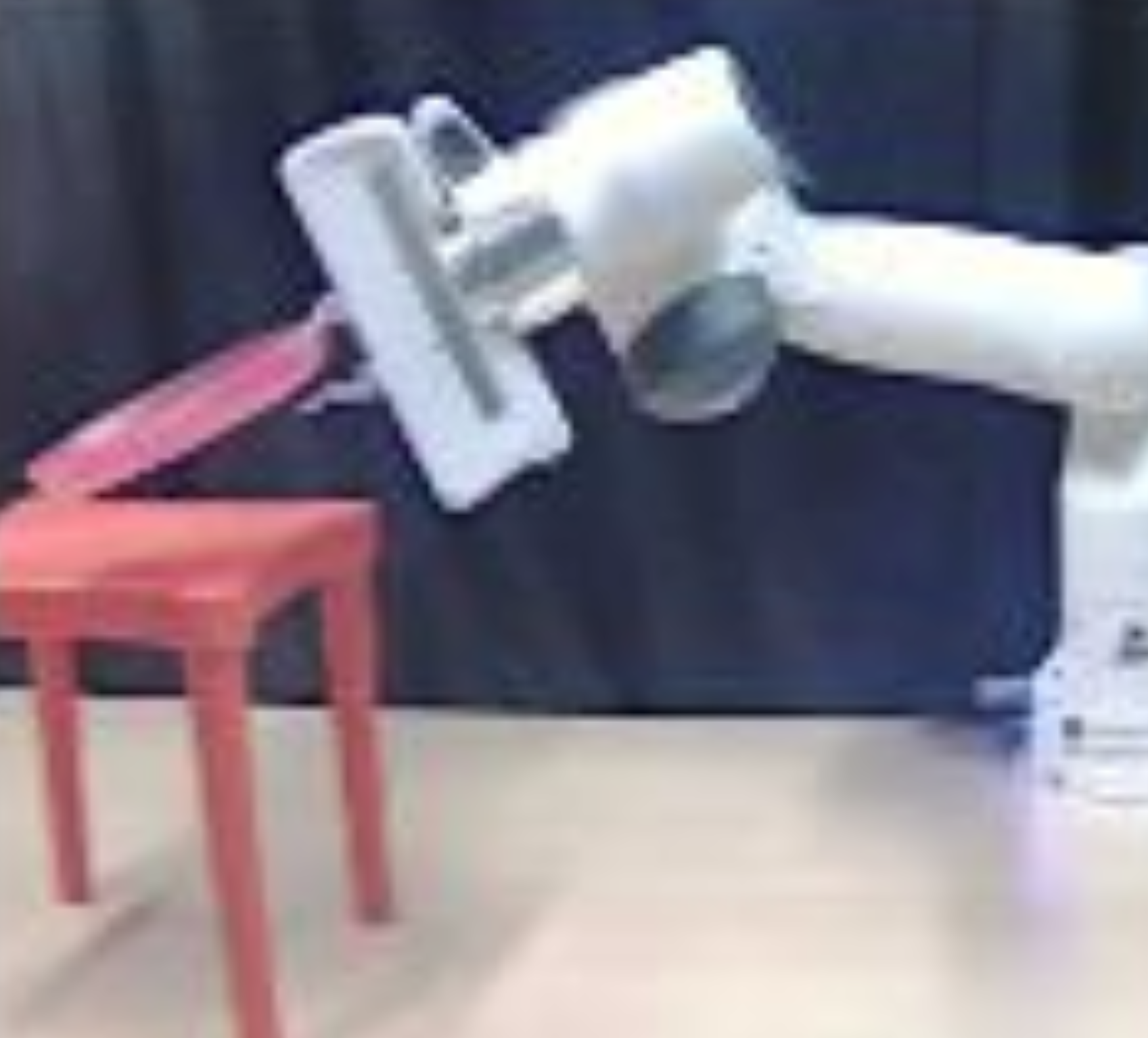}  \clap{\hspace*{3mm}\raisebox{.45cm}{\(\times\)}}
         & \includegraphics[width=1.45cm]{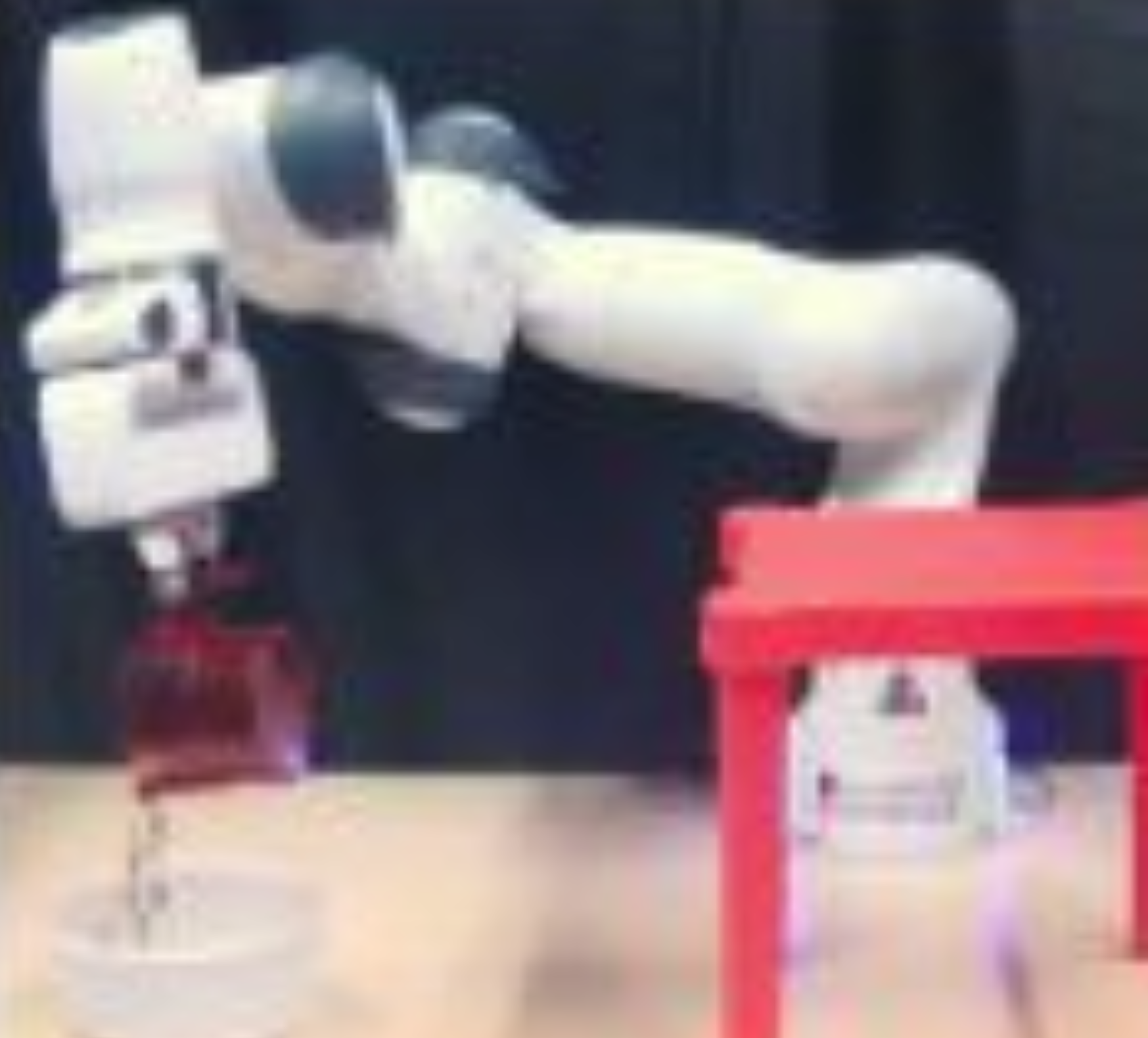}               \clap{\hspace*{3mm}\raisebox{.45cm}{\(\times\)}}
         & \includegraphics[width=1.45cm]{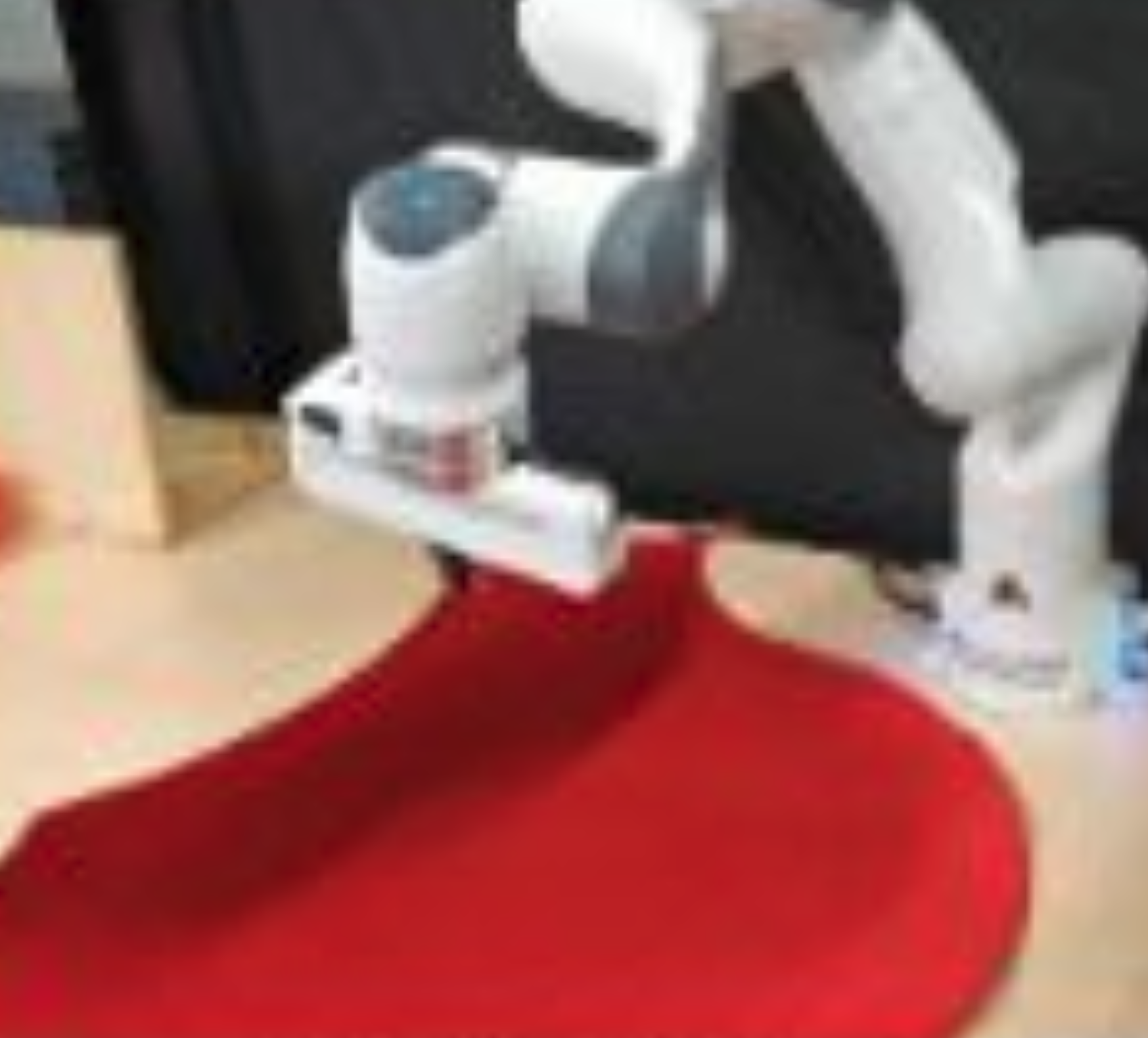}            \clap{\hspace*{3mm}\raisebox{.45cm}{\(\times\)}}
         & \includegraphics[width=1.45cm]{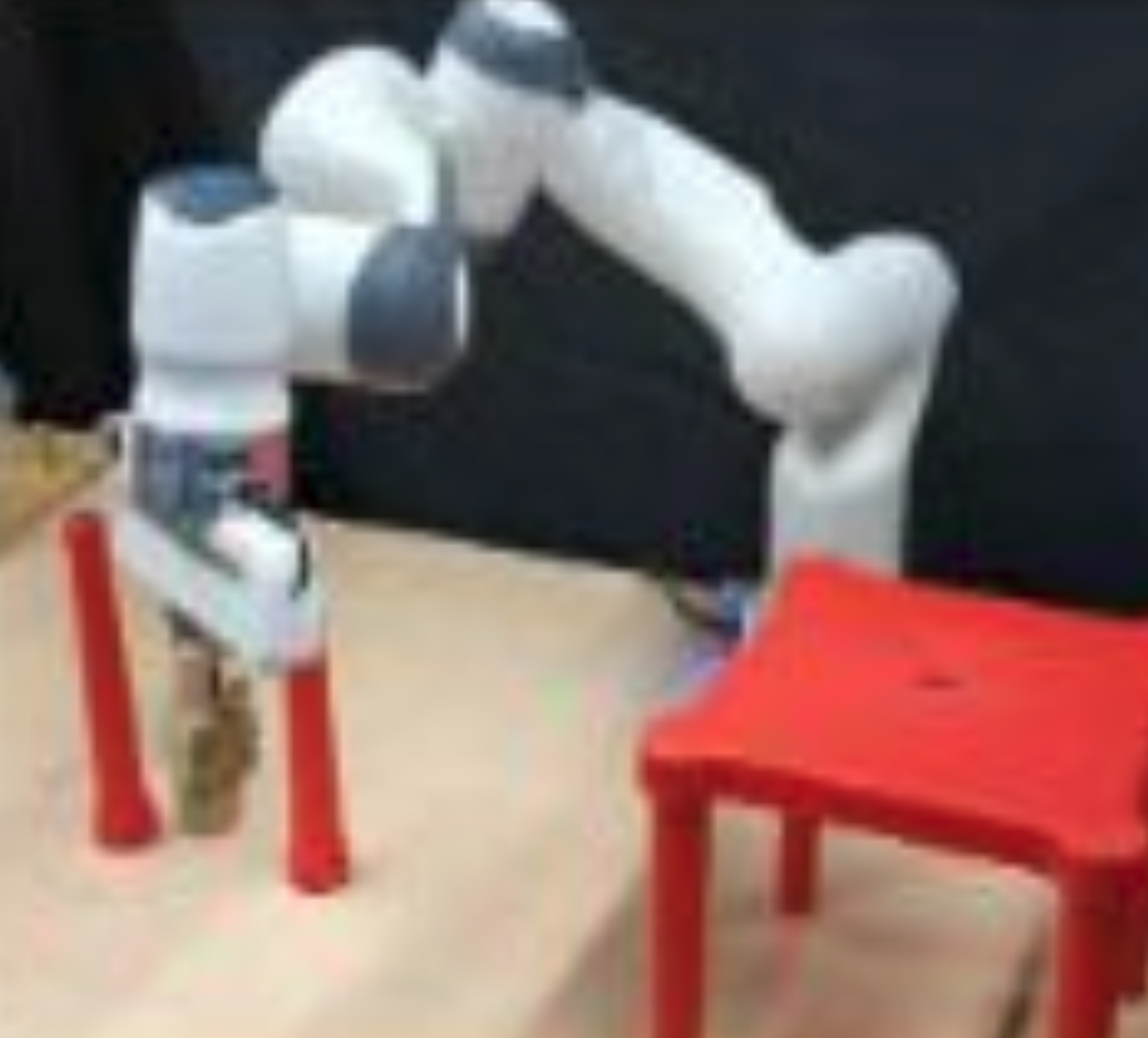}        \clap{\hspace*{3mm}\raisebox{.45cm}{\(\times\)}}
         & \includegraphics[width=1.45cm]{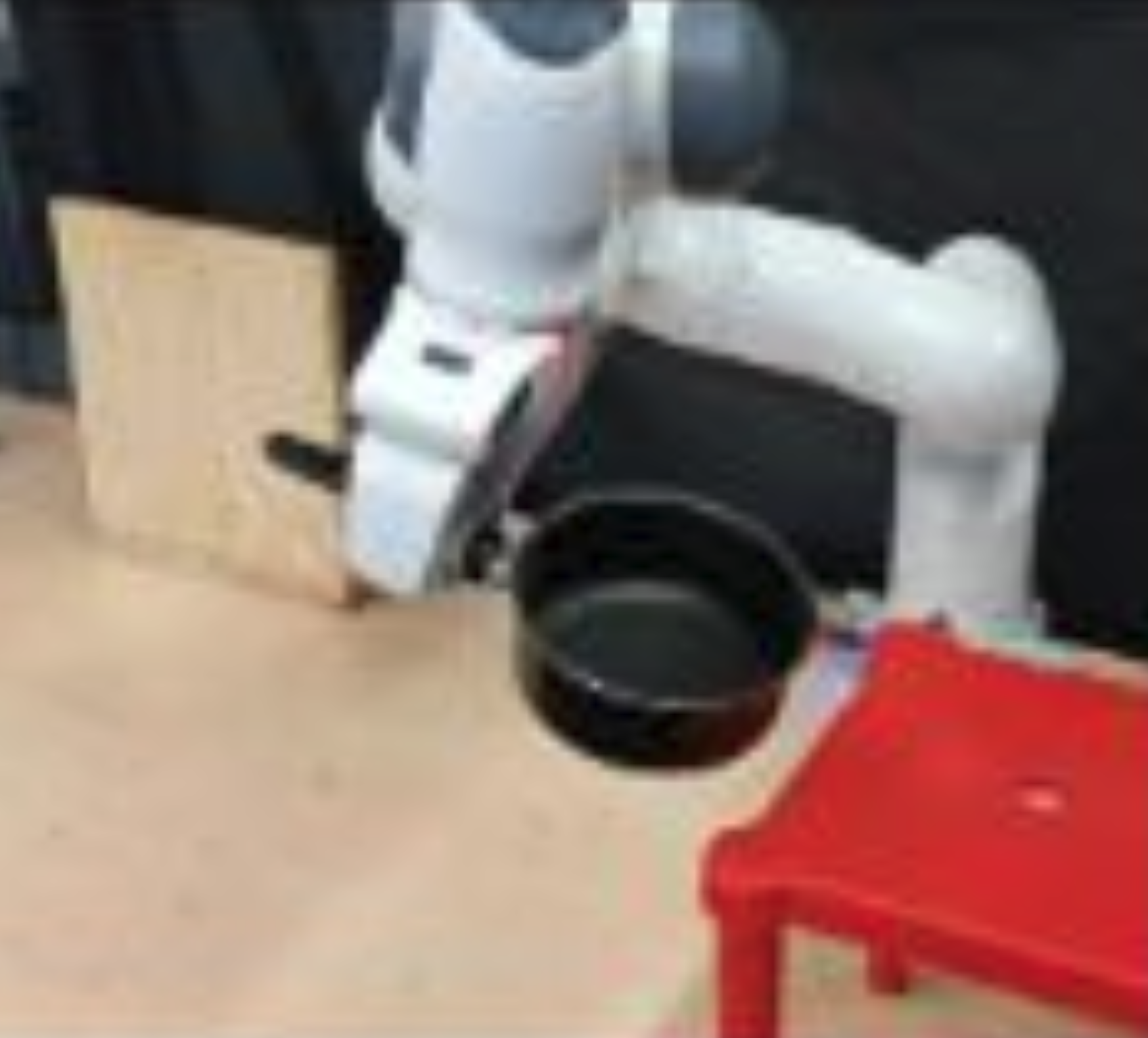}             \clap{\hspace*{3mm}\raisebox{.45cm}{\(\times\)}}
         & \includegraphics[width=1.45cm]{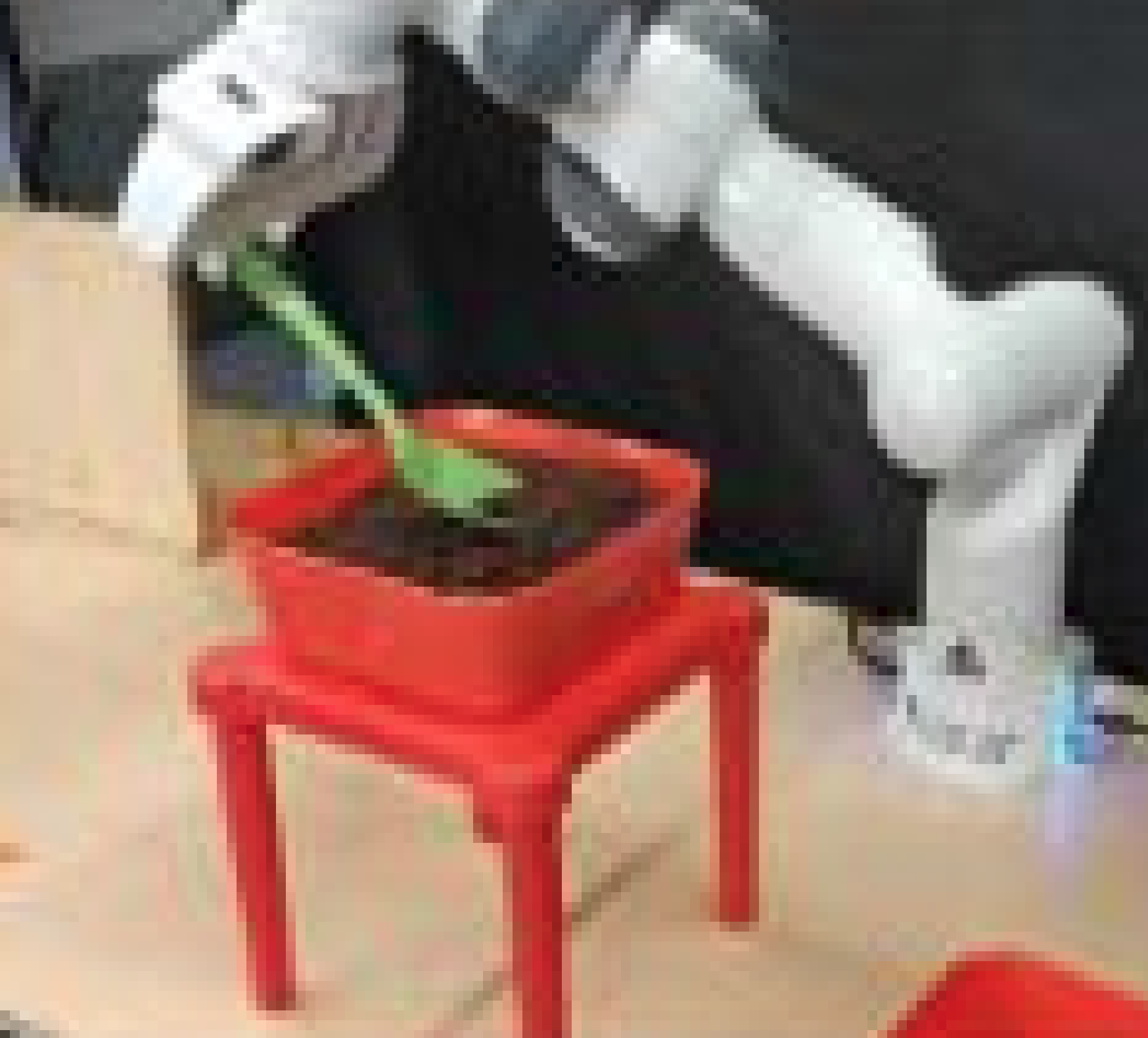}              \clap{\hspace*{3mm}\raisebox{.45cm}{\(\times\)}}
         & \includegraphics[width=1.45cm]{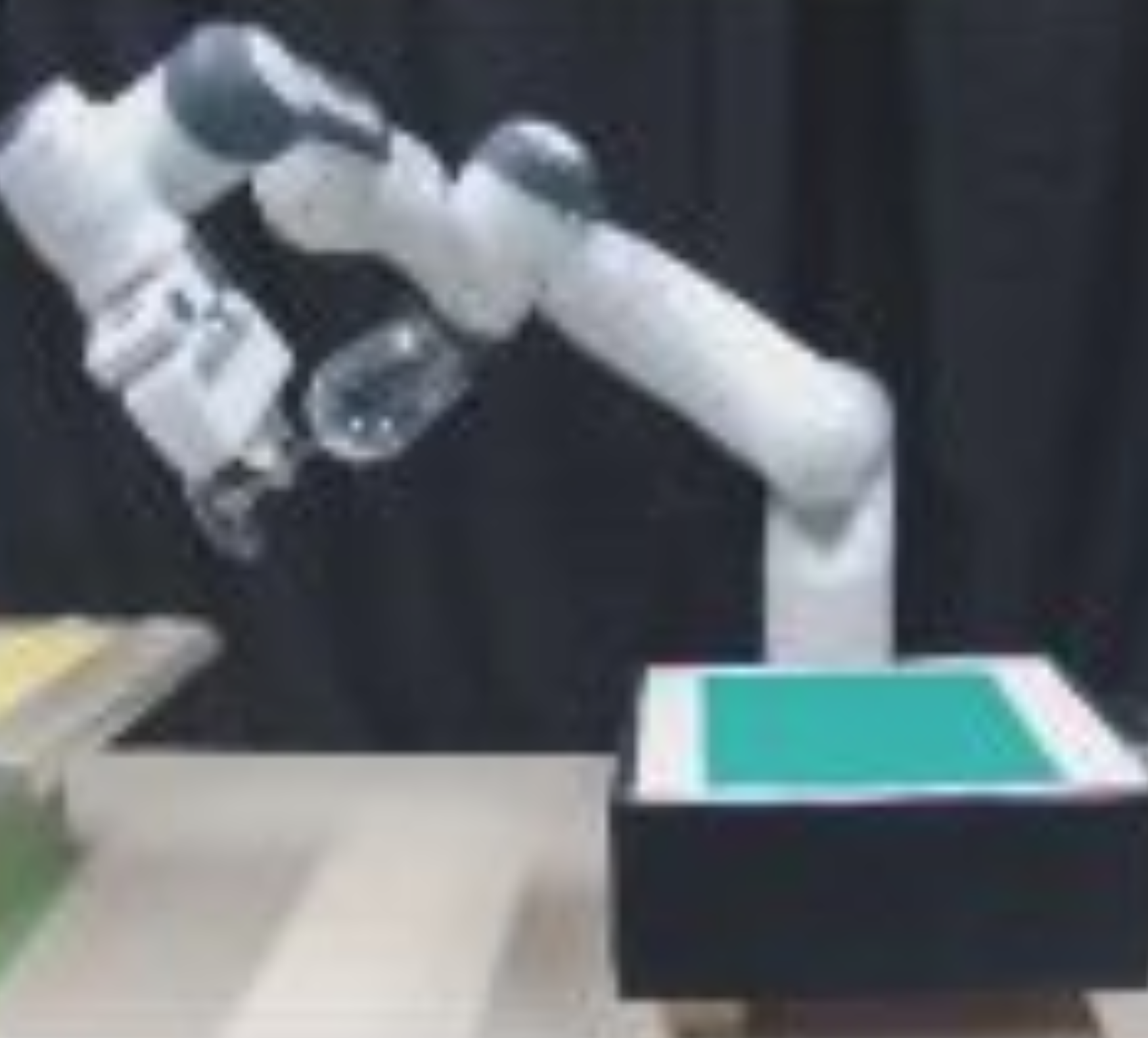}
         \\[-.45em]
         \scriptsize Open Box & \scriptsize Shelve Bottle & \scriptsize Stack Plate & \scriptsize Pouring & \scriptsize Fold Mat & \scriptsize Navigate & \scriptsize Pan on Stove & \scriptsize Scooping & \scriptsize Glass Upright \\ 
         \textbf{\#Tasks} & \textbf{2} & \textbf{3} & \textbf{4} & \textbf{5} & \textbf{6} & \textbf{7} & \textbf{8} & \textbf{9} \\ 
         \midrule
         \textbf{Loss}       &  \(0.05\) &  \(0.12\) &  \(0.19\) &  \(0.21\) &  \(0.31\) &  \(0.29\) &  \(0.26\) &  \(0.26\) \\
         \textbf{Train Time} & \(1429\)s & \(1636\)s & \(1840\)s & \(1998\)s & \(2213\)s & \(2416\)s & \(2612\)s & \(2794\)s \\ \midrule
         \textbf{Dimension}
         & \(12\) & \(18\) & \(24\) & \(30\) & \(36\) & \(42\) & \(48\) & \(54\) \\
    \end{tabular}
    \caption{%
        \textbf{Multi-Manipulator Results}.
        We train stable NMODEs to jointly perform~\(2\)--\(9\) tasks controlling multiple manipulators simultaneously.
        \enquote{Train Time} is the duration of training, \enquote{Loss} is the root mean squared distance between the demonstrations and the learned solutions, and \enquote{Dimension} is the dimension of the manifold.
        While the loss is independent of problem size, the runtime of our approach scales linearly with the problem size.
        The task illustrations are taken from~\cite{AuddyEtAl2023Continual}.
    }\label{fig:manytasks}
\end{figure*}


\subsection{Real-World Robot Experiment}
\begin{figure}
    \centering
    \includegraphics[width=\linewidth]{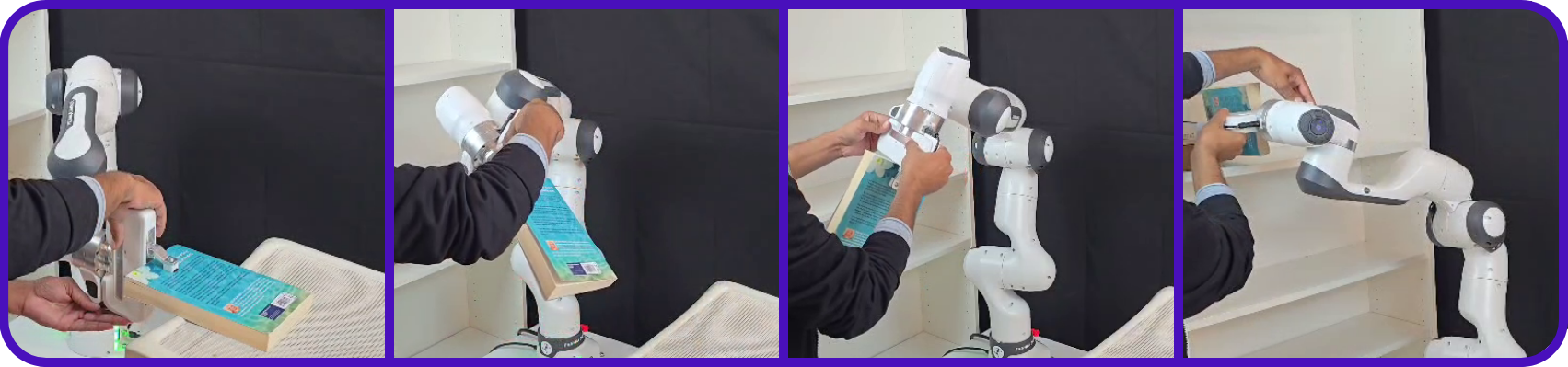}
    \put(-215,40){\color{red}\textbf{Start}}
    \put(-214,30){\color{red}\textbf{Pose}}
    \put(-48,16){\color{red!85!black}\textbf{Goal}}
    \put(-47,6){\color{red!85!black}\textbf{Pose}}
    \\[.05cm]
    \includegraphics[width=\linewidth]{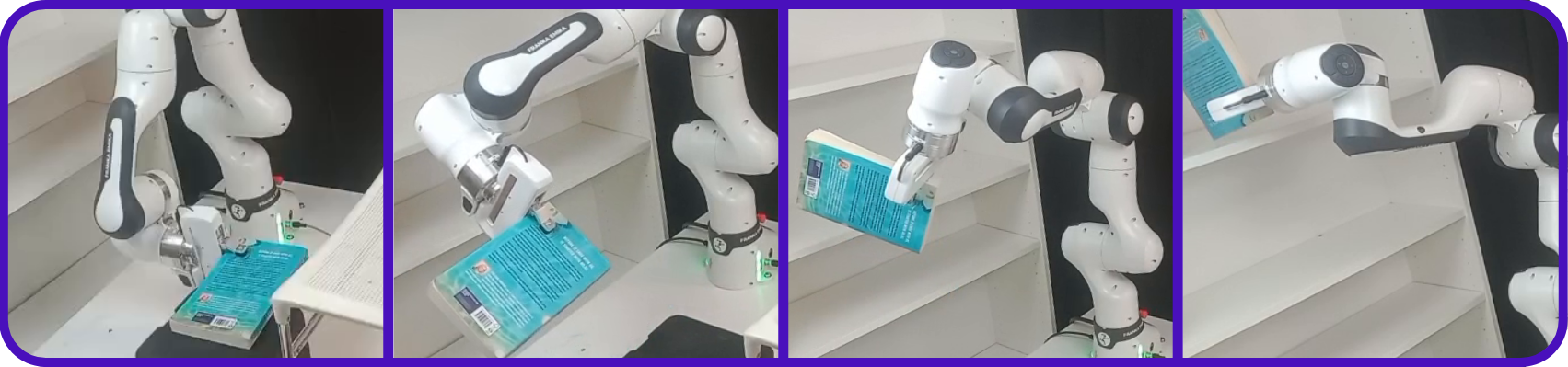}
    \caption{
        \textbf{Book Shelving Task}. 
        \emph{Top Row}: A librarian teaches the robot how to place a lying book upright on a shelf.
        \emph{Bottom Row}: The robot autonomously performs the book shelving motion using \ourframework{}, successfully executing the task from starting poses not contained in the demonstrations.
    }\label{fig:robot-experiment}
\end{figure}

Finally, we demonstrate the real-world applicability of our approach by learning a practical, real-world motion and deploying our learned \ourframework{} on a Franka Emika Panda robot.
Many future robot applications in households, shops, or libraries require placing objects on shelves.
We choose the library task of placing a lying book upright on a shelf, as illustrated in \cref{fig:robot-experiment}, for this experiment.
This task requires manipulating both the position and the orientation of the book, meaning that our demonstrations evolve on~\(\Reals^3 \times \unitquat\).
We collect~\(20\) human demonstrations and learn a \ourframework{} to follow these demonstrations.
\Cref{fig:robot-experiment} shows that the robot is able to perform the motion correctly, also from starting points that were not contained in the human demonstrations.
A video of this robot experiment is available in the supplementary material.


\section{Conclusion}
We propose \ourframework{}, a general and expressive framework based on neural manifold ODEs (NMODEs) for teaching robots complex motions from demonstrations.
By leveraging NMODEs, we enable robots to track data modalities that evolve on Riemannian manifolds, such as orientation.
Our theoretical analysis proves that \ourframework{}s are stable, thereby guaranteeing predictable, safe, and robust robot motions.
We provide an efficient training strategy for applying \ourframework{} in practice.
Our experimental results establish that \ourframework{} outperforms existing frameworks for learning stable dynamical systems on manifolds and demonstrate the practical applicability of our approach.
\bibliographystyle{IEEEtran}
\bibliography{IEEEabrv,main}


\end{document}